\newcommand{\Ptr}[1]{P^{\text{tr}}(#1)}
\newcommand{\Pte}[1]{P^{\text{te}}(#1)}
\newtheorem{theorem}{Theorem}
\newtheorem{definition}{Definition}
\newtheorem{proposition}{Proposition}
\newtheorem{corollary}{Corollary}
\newtheorem{lemma}{Lemma}
\newtheorem{assumption}{Assumption}
\def\sX{{\mathbb{X}}}
\def\sY{{\mathbb{Y}}}
\begin{document}

\runningauthor{Parjanya Prajakta Prashant, Seyedeh Baharan Khatami, Bruno Ribeiro, Babak Salimi}

\twocolumn[

\aistatstitle{Scalable Out-of-Distribution Robustness in the Presence of Unobserved Confounders}

\aistatsauthor{ Parjanya Prajakta Prashant \And Seyedeh Baharan Khatami}

\aistatsaddress{ University of California San Diego \And  University of California San Diego}

\aistatsauthor{Bruno Ribeiro \And Babak Salimi}
\aistatsaddress{Purdue University \And University of California San Diego
}

]

\begin{abstract}
    We consider the task of out-of-distribution (OOD) generalization, where the distribution shift is due to an unobserved confounder ($Z$) affecting both the covariates ($X$) and the labels ($Y$). This confounding introduces heterogeneity in the predictor, i.e., \(P(Y \mid X) = E_{P(Z \mid X)}[P(Y \mid X,Z)]\), making traditional covariate and label shift assumptions unsuitable. OOD generalization differs from traditional domain adaptation in that it does not assume access to the covariate distribution ($X^\text{te}$) of the test samples during training. These conditions create a challenging scenario for OOD robustness: (a) $Z^\text{tr}$ is an unobserved confounder during training, (b) $\Pte{Z} \neq \Ptr{Z}$, (c) $X^\text{te}$ is unavailable during training, and (d) the predictive distribution depends on $\Pte{Z}$. While prior work has developed complex predictors requiring multiple additional variables for identifiability of the latent distribution, we explore a set of identifiability assumptions that yield a surprisingly simple predictor using only a single additional variable. Our approach demonstrates superior empirical performance on several benchmark tasks.
\end{abstract}

\section{INTRODUCTION}
\label{sec:introduction}
In this paper, we explore a class of out-of-distribution (OOD) tasks in which a latent variable (\( Z \)) confounds the relationship between covariates (\( X \)) and the label (\( Y \)). This confounding creates heterogeneity in the data, causing the optimal predictor \( \hat{Y} = f_Z(X) \) to depend on \( Z \). Consequently, when the distribution of \( Z \) undergoes a shift, the joint distribution \( P(X, Y) \) shifts accordingly. Such shifts often lead to poor generalization for models trained using empirical risk minimization~\citep{quinonero2022dataset}.
Addressing OOD tasks in the presence of latent confounders is particularly challenging because \( Z \) is unobserved during both training and testing, the latent distribution changes (\( \Ptr{Z} \neq \Pte{Z} \)), and the posterior predictive distribution depends on \( \Pte{Z} \) (i.e., \( \hat{Y} = \mathbb{E}_{Z \sim \Pte{Z \mid X}}[f_Z(X)] \)).

Traditional approaches primarily focus on distribution shifts in \( X \) or \( Y \), assuming \( Z \) affects only one of them~\citep{Shimodaira2000, Lipton2018, IRM, DROwithlabel, HetRiskMin}. Recent studies investigate scenarios where shifts in the latent confounder (\( Z \)) affect both \(X\) and \(Y\)~\citep{Alabdulmohsin2023, ProxyDA}. These methods typically require auxiliary information through proxy variables (noisy measurements of \( Z \)) and concept variables (concept bottleneck mediating the dependence between \( X \) and \( Y \)). However, in many practical cases, multiple auxiliary variables might not exist or be observable. Furthermore, these approaches often rely on complex and hard-to-train techniques, such as generative models or kernel methods~\citep{lu2014scale, lucas2019understanding, paulus2020rao}.

Motivated by these challenges, we propose a simpler and scalable approach which requires only one additional variable. Inspired by Box's principle that "all models are wrong, but some are useful," we establish identifiability of the latent distribution—and consequently the predictor—based on a set of practical assumptions. Unlike prior work, we do not require a concept variable and demonstrate that identifiability is achievable with only a proxy variable or multiple data sources. Based on our identifiability theory, we introduce a novel regularizer, which is crucial for ensuring the convergence of our model, as shown in \Cref{sec:results}.

\textbf{Contributions.} Our work addresses OOD generalization when there is a shift in the latent confounder distribution. We theoretically establish the identifiability of latent distribution using only a single additional variable (either a proxy or data from multiple sources), relaxing requirements from prior work that require multiple auxiliary variables. Based on this theoretical foundation, we develop a two-stage approach: First, we learn the latent confounder distribution (\(\Ptr{Z \mid X}\)) using an encoder-decoder architecture. The encoder models the posterior distribution of the confounder (\(\Ptr{Z \mid X}\)) and the decoder reconstructs the proxy variable distribution (\(P(S \mid Z)\)). Second, we train a mixture-of-experts model where each expert specializes in a specific confounder assignment. At inference time, we address distribution shift by estimating \(\Pte{Z}\) and appropriately reweighting \(\Ptr{Z \mid X}\) to obtain \(\Pte{Z \mid X}\). We outperform existing baselines on both synthetic and real data.

\section{RELATED WORK}
\label{sec:related_work}

\paragraph{\bf Invariant OOD methods} Invariant OOD methods focus on learning representations that remain stable across training and test environments. Techniques such as IRM~\citep{IRM}, GroupDRO~\citep{DROwithlabel}, HRM~\citep{HetRiskMin}, and V-REx~\citep{krueger2021out} leverage environment labels to train an invariant predictor. \citet{wang2022out} learn an invariant predictor using knowledge of causal features. Other invariant approaches do not require environment labels~\citep{nam2020learning, EIIL, liu2021just, nam2022spread}. \citet{relaxedIRM} relax the strict invariance assumption but still focus on a single predictor for all environments. Additionally, \citet{kaur2022modeling} extend invariant prediction to multiple shifts. Nevertheless, empirical studies have demonstrated that such methods may underperform compared to empirical risk minimization~\citep{rosenfeld2020risks, domainbed}, and theoretically, the invariant predictor assumption does not hold in many scenarios~\citep{Schrouff2022, lin2022zin}.

\paragraph{\bf Proxy Methods} Proxy-based approaches have recently been employed to address sub-population shifts~\citep{Alabdulmohsin2023, ProxyDA}, which assume access to proxy and concept variables to make accurate predictions under environmental shifts. \citet{ProxyDA} eliminate the need for concept variables if multiple labeled domains with proxy variables are accessible. However, these methods assume that proxy variables are observable in the test data and that test data are available during training. Proxy methods have also been explored in causal inference~\citep{Miao, tchetgen2020introduction, xu2023kernel}, where \citet{Miao} and \citet{tchetgen2020introduction} establish identifiability when multiple proxies for the environment are observed. \citet{xu2023kernel} assume the label (\(Y\)) is a deterministic function of \(X\) and \(Z\). Our approach differs by relaxing several assumptions: (1) we require only one additional variable - a proxy variable or access to multiple training domains; (2) we do not require access to the test distribution during training; (3) proxy variables are not required in the test set; (4) we allow both causal and anti-causal relationships between features and the label; and (5) we offer a simpler algorithm that avoids generative modeling or kernel methods.

\paragraph{\bf Domain Adaptation} Domain adaptation methods typically assume access to the test distribution during training. Many of these methods seek to learn an invariant feature space by assuming a stable predictor~\citep{deepcoral, dann}. \citet{garg2023rlsbench} consider relaxed-label shift, where along with label shift, $P(X \mid Y)$ changes in a constrained manner. Test-time adaptation methods have recently been developed to modify models during inference to match the test distribution~\citep{arm, testtimeentropy, testtimeself, testtimetricks}. For instance, \citet{arm} adopt a meta-learning strategy, while \citet{testtimeentropy} and \citet{testtimeself} retrain the model to minimize entropy and use self-supervision at test time, respectively. Recently, test-time adaptation methods addressing spurious correlations have been proposed~\citep{sun2023beyond, eastwood2024spuriosity}. However, these methods do not address confounder shifts~\citep{Alabdulmohsin2023}.

\paragraph{\bf Causality in OOD and Domain Adaptation} Causal reasoning has been applied to OOD generalization and domain adaptation problems ~\citep{causalanticausal,Zhang2015, scholkopf2021toward, causaloodinv2, latentcausal, InvcausalOOD, mahajan2021domain}. Causal models help elucidate distribution shifts and identify invariant relationships ~\citep{causalanticausal, causaloodinv2}. \citet{latentcausal} assume that latent space can be disentangled into spurious and causal features and learn the causal features from multiple labeled sources. \citet{huang2022harnessing} similarly disentangle content and style features from image data leveraging multiple domains. \citet{CausalBalancing} re-weight data to remove confounding effects between labels and covariates. They assume \(Y \rightarrow X\) and require several training domains for identifiability. \citet{gao2023robust} eliminate observed confounder effects in graph representation learning. However, general shifts in the latent confounder distribution are not considered in these works.

\section{NOTATION AND PROBLEM FORMULATION}
\label{sec:formulation}

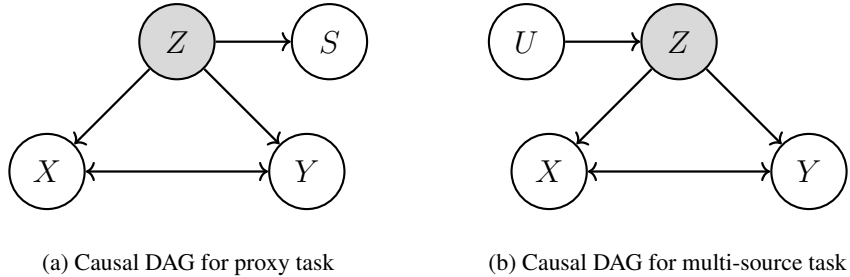
\begin{figure*}[t]
\centering
\begin{subfigure}[t]{0.45\textwidth}

\centering
\begin{tikzpicture}[node distance=2cm and 2cm, scale=0.1]
  \node[draw, ellipse, fill=gray!30, minimum size=1cm, thick] (Z) {\large $Z$};
  \node[align=left, right=0.1cm of Z] (Z_desc) {};

  \node[draw, ellipse, fill=white, right= 1cm of Z, minimum size=1cm, thick] (S) {\large $S$};
  \node[align=left, below=0.1cm of S] (S_desc){};

  \node[draw, ellipse, fill=white, below left=1cm and 1cm of Z, minimum size=1cm, thick] (X) {\large $X$};
  \node[align=left, below=0.1cm of X] (X_desc){};

  \node[draw, ellipse, fill=white, below right=1cm and 1cm of Z, minimum size=1cm, thick] (Y) {\large $Y$};
  \node[align=left, below=0.1cm of Y] (Y_desc) {};

  \draw[->, line width=0.3mm] (Z) -- (S);
  \draw[->, line width=0.3mm] (Z) -- (X);
  \draw[->, line width=0.3mm] (Z) -- (Y);
  \draw[<->, line width=0.3mm] (X) -- (Y);
\end{tikzpicture}
\caption{Causal DAG for proxy task}
\label{fig:DAG1a}
\end{subfigure}
\begin{subfigure}[t]{0.45\textwidth}
\centering
\begin{tikzpicture}[node distance=2cm and 2cm, scale=0.1]
  \node[draw, ellipse, fill=gray!30, minimum size=1cm, thick] (Z) {\large $Z$};
  \node[align=left, right=0.1cm of Z] (Z_desc) {};

  \node[draw, ellipse, fill=white, left= 1cm of Z, minimum size=1cm, thick] (U) {\large $U$};
  \node[align=left, below=0.1cm of U] (U_desc){};

  \node[draw, ellipse, fill=white, below left=1cm and 1cm of Z, minimum size=1cm, thick] (X) {\large $X$};
  \node[align=left, below=0.1cm of X] (X_desc){};

  \node[draw, ellipse, fill=white, below right=1cm and 1cm of Z, minimum size=1cm, thick] (Y) {\large $Y$};
  \node[align=left, below=0.1cm of Y] (Y_desc) {};

  \draw[->, line width=0.3mm] (U) -- (Z);
  \draw[->, line width=0.3mm] (Z) -- (X);
  \draw[->, line width=0.3mm] (Z) -- (Y);
  \draw[<->, line width=0.3mm] (X) -- (Y);
\end{tikzpicture}
\caption{Causal DAG for multi-source task}
\label{fig:DAG1b}
\end{subfigure}
\caption{Causal diagrams. Shaded circles denote unobserved variables and solid circles denote observed variables. $X$ is the covariate, $Y$ is the label, $Z$ is the confounder, $S$ is the proxy for the confounder (in the proxy task), and $U$ is the dataset id for the multi-source task.}
\label{fig:DAGs}
\end{figure*}

In this section, we introduce our tasks and formally define our notation.

\paragraph{Notation} Let \( X \in \sX \subseteq \mathbb{R}^{n_x} \) be the input variable (e.g., an image, a graph, a set), \( Y \in \sY \) be the target variable, and \( Z \in [n_z] \) with \( n_z \geq 2 \) (where \([n_z] := \{1, \ldots, n_z\}\)) denote the latent confounder that influences both \(X\) and \(Y\). Examples of such confounders include socioeconomic factors, demographic characteristics, or institutional policies. Let \( P^{tr} \) and \( P^{te} \) denote train and test distributions respectively. We use \( P \) for distributions that are invariant across train and test.  Under these settings, we consider two tasks, that we give arbitrary names for clarity.

\paragraph{Proxy OOD Task} This task is described by the causal DAG shown in Figure \ref{fig:DAG1a}. We have an additional variable (\(S\)) which is a proxy for the latent confounder (\(Z\))~\citep{pearlmeasurementbias, kuroki2014measurement}. To illustrate this setup, consider the example from \citet{kuroki2014measurement} where participation in a social services program (\(X\)) affects children's cognitive ability (\(Y\)), with socioeconomic status (\(Z\)) acting as a latent confounder. In this scenario, father's occupation and family income, collectively denoted as (\(S\)), serve as proxy variables for the latent confounder (\(Z\)). The goal is to train a model that generalizes to test data where the distribution of the latent confounder has shifted from its training distribution.

\begin{definition}[\textbf{Proxy OOD}] \label{def:OODsetting1}
    Assume the data generation process for the input variable (\(X\)), the target label (\(Y\)), the latent confounder variable (\(Z\)), and proxy (\(S\)) follows the causal DAG in Figure~\ref{fig:DAG1a}. Let the training data \(D^{\text{tr}} = \{(x^i, y^i, s^i)\}_{i=1}^{N^{\text{tr}}} \sim P^{\text{tr}}(X, Y, S)\) be collected with respect to the training confounder distribution \(P^{\text{tr}}(Z)\), i.e., \(P^{\text{tr}}(X, Y, S) = \sum_{Z \in [n_z]} P(X, Y, S \mid Z) P^{\text{tr}}(Z)\). In the OOD test data, the confounder distribution shifts, i.e., \(P^{\text{tr}}(Z) \neq P^{\text{te}}(Z)\), while the causal relationships, i.e., \(P(S \mid Z), P(X \mid Z), P(Y \mid Z), P(X , Y \mid Z) \) remain invariant. Our goal is to learn a predictor that can accurately predict the labels in a test dataset \(D^{\text{te}} = \{x^i\}_{i=1}^{N^{\text{te}}} \sim P^{\text{te}}(X)\) with a new test confounder distribution \(P^{\text{te}}(Z)\).
\end{definition}

    We further assume the following conditions:
    \begin{enumerate}
    \item \textbf{Discrete Proxy:} $S \in [n_s]$. If $S$ is continuous, we can discretize it.
    \item \textbf{Confounder Support:} \(\text{supp}(\Pte{Z}) \subseteq \text{supp}(\Ptr{Z})\).
    \end{enumerate}

\paragraph{Multi-source OOD Task} This task does not assume any observed proxies but multiple training datasets from different sources or domains with varying distributions of the confounder as shown by the causal DAG in Figure \ref{fig:DAG1b}. We require only one domain to be labeled. The task is to train a model that generalizes to unseen domains with the domain causing a shift in the distribution of the latent confounder (\(Z\)). For example, consider data from various health centers to predict healthcare outcomes such as disease prevalence (\(Y\)) based on attributes like age, gender and diet (\(X\)). Socio-economic factors (\(Z\)) confound the relationship between \(X\) and \(Y\). Suppose only the data from one health center is labeled due to resource constraints. The goal is to use labeled data from one health center and unlabeled data from other centers to train a model that can generalize to predict outcomes in an unseen health center.

\begin{definition}[\textbf{Multi-Source OOD}] \label{def:OODsetting2}
    Assume the data generation process for the input variable \(X\), the target label \(Y\), the latent confounder \(Z\), and dataset identifier \(U\) follows the causal DAG in Figure~\ref{fig:DAG1b}. Let \(P_k(X, Y)\) denote the distribution for the \(k^{\text{th}}\) dataset, i.e., \(P_k(X, Y) = \sum_{Z \in [n_z]} P(X, Y \mid Z) P(Z \mid U = k)\). We assume we have access to \(P_1(X, Y)\) and \(P_k(X)\) for \(2 \leq k \leq n_u\). In the OOD test data, we assume we see a new distribution, i.e., \(P^{\text{te}} = P_{n_u+1}(X)\), while the causal relationships, i.e., \(P(X \mid Z), P(Y \mid Z), P(X , Y \mid Z) \) remain invariant. The OOD extrapolation goal is to learn a predictor that can accurately predict the missing target labels in a test dataset \(D^{\text{te}} = \{x^i\}_{i=1}^{N^{\text{te}}} \sim P^{\text{te}}(X)\).
\end{definition}

    We further assume the following conditions:
\begin{enumerate}
    \item \textbf{Confounder support:} \(\text{supp}\big(P(Z \mid U = n_u + 1)\big) \subseteq \text{supp}\big(P(Z \mid U = 1)\big)\).
    \item \textbf{Distinct domains:} \(P(Z \mid U = k) \neq P(Z \mid U = l)\).
\end{enumerate}

\section{THEORY}\label{sec:Identifiability}
In this section, we establish identifiability of the latent distribution \(P(Z \mid X)\) given observed distribution \(P(X,S)\). We first present sufficient conditions for identifiability and discuss them in the context of existing work. We then state our main result that the latent distribution is approximately identifiable with an error bound that depends on \(\max_x P(Z \mid X=x)\). Finally, we demonstrate that this bound asymptotically approaches zero as the dimensionality of X increases, making the latent distribution fully identifiable for high-dimensional data. While we discuss identifiability with respect to the Proxy OOD task, the same results hold for the Multi-Source OOD task.

\subsection{Assumptions}
Latent confounder distributions with a single proxy variable cannot be identified in general~\citep{pearlmeasurementbias}. In order to prove identifiability, we consider the following assumptions:

\begin{assumption}[Structural Constraints] \label{asp:structural_constraints}
The Graphs in Figure~\ref{fig:DAGs} are Markov.~\citep{spirtes2000causation}.
\end{assumption}

This assumption implies the following conditional independence property: \(S \perp X \mid Z\).

\begin{assumption}[Proxy Distribution Rank] \label{asp:proxy_rank}
Let \( M \) denote the matrix of conditional probabilities \( P(S \mid Z) \), where each entry \( M_{zs} \) is given by \( P(S = s \mid Z = z) \). Then, \( \text{rank}(M) = n_{z} \).
\end{assumption}

This assumption ensures the identifiability of the latent confounder \( Z \). Without this condition, degenerate solutions where each data point is assigned to its own distinct latent class (i.e., \( \forall i, Z_i = i )\) would be possible. Moreover, this implies \( n_s \geq n_z \) which guarantees that the proxy variable contains enough information to recover the latent variable.

\begin{assumption}[Weak Overlap] \label{asp:weak_overlap}
There exists \(\eta \in \left( \frac{1}{2}, 1 \right]\) such that, for each \(i \in \{1, 2, \dots, n_z\}\),

\[
\max_{x} P(Z = i \mid X = x) \geq \eta.
\]

\end{assumption}

Intuitively, this assumption requires that for each latent class \(Z = i\), there exists a region in the feature space where that class has high posterior probability. Specifically, for each class, there is some observation \(X = x\) where \(P(Z = i \mid X = x) \geq \eta > \frac{1}{2}\), meaning the class has the highest probability in that region. This condition prevents excessive overlap between latent variables and ensures they remain distinguishable based on the observed data.

\paragraph{Discussion}
Our assumptions are fairly general. Assumption~\ref{asp:structural_constraints} and~\ref{asp:proxy_rank} are common in the proximal causal inference literature~\citep{spirtes2000causation, Miao, Alabdulmohsin2023, ProxyDA}. Assumption~\ref{asp:weak_overlap} may seem restrictive, however in the subsequent subsection, we will show that it is reasonable for high-dimensional data. In contrast to prior works such as \citet{ProxyDA} and \citet{Miao}, we do not make the informative variable assumption (requiring that \( P(Z \mid X) \) does not take a finite set of values) or the assumption of additional variables like the concept variables.  We provide further discussion in Appendix~\ref{appendix:assumption_discussion}.

\subsection{Identifiability}
We show that the latent distribution is approximately identifiable, with the identifiability error being a function of $\eta$ from Assumption~\ref{asp:weak_overlap}. As $\eta$ increases, indicating less overlap, the error decreases. For large values of $\eta$ (i.e., $\eta \to 1$), the error approaches zero and the latent distribution becomes increasingly identifiable.

\begin{theorem}[Approximate Identifiability of Latent Variables]
\label{theorem:identifiability}

Let $P(X, S, Z)$ be the true distribution and $Q(X, S, Z)$ be any other distribution over the observed variables $X$, proxy variables $S$, and latent variables $Z$. Suppose that the marginal distributions over $X$ and $S$ are identical, i.e., $P(X, S) = Q(X, S)$, and that Assumptions~\ref{asp:structural_constraints}-~\ref{asp:weak_overlap} are satisfied for both $P$ and $Q$. Then, there exists a permutation $\pi$ of $\{1, 2, \dots, n_z\}$ such that for each $i \in \{1, 2, \dots, n_z\}$,  conditional distributions $P(Z \mid X = x)$ and $Q(Z \mid X = x)$ satisfy the following bound:

\begin{align}
\sup_{x \in \text{supp}(X)} \left| P(Z = i \mid X = x) - Q(Z = \pi(i) \mid X = x) \right|
& \nonumber \\  \leq O\left( \frac{1 - \eta}{2\eta - 1} \right) \label{eq:bound}
\end{align}

where $\eta \in \left( \frac{1}{2}, 1 \right]$ is as defined in Assumption \ref{asp:weak_overlap}.

\end{theorem}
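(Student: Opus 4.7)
The plan is to (a) translate the identifiability question into a matrix equation, (b) use weak overlap on both $P$ and $Q$ to match the columns of $M^P$ and $M^Q$ up to a permutation, and (c) convert this matrix-level closeness into the pointwise posterior bound in~\eqref{eq:bound}.

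By Assumption~\ref{asp:structural_constraints}, $P(S \mid X=x) = M^P \mathbf{p}_x$ and $Q(S \mid X=x) = M^Q \mathbf{q}_x$, where $\mathbf{p}_x, \mathbf{q}_x$ are the $P$- and $Q$-posteriors of $Z$ given $X=x$ and $M^P, M^Q$ are column-stochastic with $z$-th columns $\mu^P_z, \mu^Q_z$. Matching marginals $P(X,S) = Q(X,S)$ force $M^P \mathbf{p}_x = M^Q \mathbf{q}_x$ for every $x$ in the support. Assumption~\ref{asp:proxy_rank} makes $M^P$ full column rank, so $B := (M^P)^+ M^Q$ is well defined and satisfies $\mathbf{p}_x = B \mathbf{q}_x$ uniformly in $x$; the symmetric argument applied to $Q$ shows $B$ is invertible. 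To locate the columns I would apply Assumption~\ref{asp:weak_overlap} to both distributions: a $P$-witness $x_i^\star$ gives $\|\mathbf{p}_{x_i^\star} - e_i\|_1 \leq 2(1-\eta)$, and since $M^P$ has $\ell_1$-operator norm $1$, $\|\mu^P_i - M^Q \mathbf{q}_{x_i^\star}\|_1 \leq 2(1-\eta)$; hence each $\mu^P_i$ lies within TV distance $2(1-\eta)$ of the convex hull of $\{\mu^Q_j\}_j$, and the symmetric statement (with $Q$-witnesses $y_j^\star$) gives the reverse inclusion. In the exact case $\eta = 1$, these mutual convex containments together with the linear independence granted by Assumption~\ref{asp:proxy_rank} force $\{\mu^P_i\} = \{\mu^Q_j\}$ as multisets, producing a permutation $\pi$ with $\mu^P_i = \mu^Q_{\pi(i)}$ and $B = \Pi_\pi$.

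For the approximate regime I would define $\pi(i) := \arg\max_j (\mathbf{q}_{x_i^\star})_j$ and verify $\pi$ is a bijection using the gap $2\eta - 1 > 0$. Substituting the two approximate containments into one another and cancelling via the rank condition produces an identity of the form $R^Q R^P = I + E$ with $\|E\|_{\max} = O(1-\eta)$, where $R^Q$ (resp.\ $R^P$) is the row-stochastic matrix whose rows are $\mathbf{q}_{x_i^\star}$ (resp.\ $\mathbf{p}_{y_j^\star}$). Weak overlap makes each such row $\eta$-concentrated on a single coordinate, so a diagonal-dominance perturbation argument (Varah-style) with margin $2\eta - 1$ converts $\|E\|_{\max} = O(1-\eta)$ into the entry-wise estimate $\|R^Q - \Pi_\pi\|_{\max} = O((1-\eta)/(2\eta - 1))$ and similarly for $R^P$, which in turn controls $\|B - \Pi_\pi\|_{\max}$ at the same rate.

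Finally, for any $x$ in the support, $\mathbf{p}_x - \Pi_\pi \mathbf{q}_x = (B - \Pi_\pi)\mathbf{q}_x$; because $\mathbf{q}_x$ is a probability vector, each coordinate of the difference is bounded by $\|B - \Pi_\pi\|_{\max}$, yielding $\sup_{x} |P(Z=i \mid X=x) - Q(Z=\pi(i) \mid X=x)| = O((1-\eta)/(2\eta-1))$ uniformly in $x$ and in $i$. The main obstacle I expect is the diagonal-dominance step: extracting the clean $1/(2\eta - 1)$ factor --- rather than, for instance, a weaker $1/(2\eta-1)^2$ --- requires tightly exploiting the row-stochastic structure of both $R^P$ and $R^Q$ and arguing that the pseudoinverse $(M^P)^+$ contributes only an $\eta$-independent conditioning constant that can be absorbed into the hidden $O$.
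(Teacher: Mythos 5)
Your skeleton matches the paper's: both reduce, via Assumptions~\ref{asp:structural_constraints} and \ref{asp:proxy_rank}, to the linear relation $\mathbf{p}_x = B\,\mathbf{q}_x$ with $B = (M^P)^{+}M^Q$; both aim to show $B$ is entrywise close to a permutation matrix; and your closing step ($\mathbf{p}_x - \Pi_\pi \mathbf{q}_x = (B-\Pi_\pi)\mathbf{q}_x$, bounded coordinatewise by $\max_{i,j}|B_{ij}-(\Pi_\pi)_{ij}|$ because $\mathbf{q}_x$ is a probability vector) is exactly the paper's final computation. The exact case $\eta=1$ is also handled correctly.

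The gap is the central quantitative claim, $\max_{i,j}|B_{ij}-(\Pi_\pi)_{ij}| = O\bigl((1-\eta)/(2\eta-1)\bigr)$, which your proposal does not establish. The route you sketch (forming $R^Q R^P = I + E$ from the two witness systems and running a Varah-style diagonal-dominance perturbation) runs into the obstacle you yourself flag: to propagate closeness of $R^Q$ to $\Pi_\pi$ into closeness of $B$ you need an a priori bound on $\|B\|$, and $B=(M^P)^{+}M^Q$ only supplies one in terms of $\sigma_{\min}(M^P)$ --- a distribution-dependent conditioning constant that the theorem's $O(\cdot)$ (which in the paper hides only $n_z$-dependent factors) must not contain. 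The paper sidesteps this with a more elementary observation: since $B\mathbf{q}_x=\mathbf{p}_x\in[0,1]^{n_z}$ for \emph{every} $x$, each row $r$ satisfies $\left|\sum_i \left(B_{ri}-\frac{1}{2}\right)\mathbf{q}_x(i)\right|\le \frac{1}{2}$; evaluating at the $Q$-witness of the class where $|B_{ri}-\frac{1}{2}|$ is largest gives $\max_i|B_{ri}-\frac{1}{2}|\le \frac{1}{2(2\eta-1)}$, i.e.\ every entry of $B$ lies in $\left[-\frac{1-\eta}{2\eta-1},\,1+\frac{1-\eta}{2\eta-1}\right]$ with no conditioning constant whatsoever. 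Combined with $\mathbf{1}^\top B=\mathbf{1}^\top$ and the fact (from weak overlap of $P$) that each row of $B$ has an entry at least $\eta$, a short counting argument shows that when $\eta>\frac{n_z+1}{n_z+2}$ the large entries single out a permutation and all remaining entries are $O\bigl((1-\eta)/(2\eta-1)\bigr)$, while for $\eta\le\frac{n_z+1}{n_z+2}$ the bound is vacuous and any permutation works. You would need this case split as well, since your rule $\pi(i)=\arg\max_j(\mathbf{q}_{x_i^\star})_j$ need not be a bijection when $\eta$ is only slightly above $\frac{1}{2}$. I recommend replacing the diagonal-dominance step with this direct entry bound.
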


\paragraph{Proof Sketch}
Our proof proceeds in three steps.
\begin{enumerate}[leftmargin=*]
    \item \textbf{Uniqueness of \(n_z\)}: Given the observed distribution \(P(X,S)\), observe that since \(S\) is discrete, \(P(S\mid X=x)\) is a vector in \(\mathbb{R}^{n_s}\). The set of these vectors \(\{P(S \mid X=x)\}_{x \in \text{supp}(X)}\) spans a subspace in \(\mathbb{R}^{n_s}\). We show the dimension of this subspace equals \(n_z\). This establishes the unique dimensionality of the latent space.

    \item \textbf{Linear relationship of conditional distributions}: By the conditional independence \(S \perp X \mid Z\), we have \(
    P(S \mid X=x) = \sum_{z \in [n_z]} P(S \mid Z=z)P(Z=z \mid X=x),
    \)
    with an analogous expression for \(Q\). Given \(P(S \mid X) = Q(S \mid X)\), it follows that a linear transformation \(A\) relates the distributions as
    \[
    P(Z \mid X=x) = A\,Q(Z \mid X=x).
    \]

    \item \textbf{Approximation via permutation matrix}: Under Assumption~\ref{asp:weak_overlap}, we show that matrix \(A\) can be approximated by a permutation matrix up to an error bound of order
    \[
    O\left(\frac{1 - \eta}{2\eta - 1}\right).
    \]
    This leads to a bound on the difference between \(P(Z \mid X = x)\) and \(Q(Z \mid X = x)\), yielding the approximate identifiability result.
\end{enumerate}

The formal proof is provided in Appendix~\ref{appendix:approximate_identifiability_proof}.

The bound established in Step 3 ensures that \( P(Z \mid X) \) and \( Q(Z \mid X) \) are approximately related by a permutation matrix. In the limiting case where \(\eta = 1\), the approximation becomes exact, leading to full identifiability. This is formalized in the following corollary.
\begin{corollary}[Full Identifiability]
\label{corollary:full_identifiability}

If $\eta = 1$, then the latent variables are fully identifiable. That is, there exists a permutation $\pi$ such that, for all $x \in \text{supp}(X)$ and for each $i \in \{1, 2, \dots, n_z\}$,

\[
P(Z = i \mid X = x) = Q(Z = \pi(i) \mid X = x).
\]

\end{corollary}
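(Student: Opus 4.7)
The plan is to obtain Corollary 1 as the $\eta = 1$ specialisation of Theorem 1, supplemented by a short direct argument that makes transparent why the linear map between the two conditional distributions must collapse to a permutation matrix. The fastest route is to invoke Theorem 1 at $\eta = 1$: the error term $O\bigl(\frac{1-\eta}{2\eta-1}\bigr)$ evaluates to $0$, so $\sup_{x} |P(Z=i \mid X=x) - Q(Z=\pi(i) \mid X=x)| \le 0$ for each $i$, forcing pointwise equality on $\text{supp}(X)$. I would state this one-line reduction and then reconstruct the conclusion directly from the internals of the Theorem 1 proof, since the $\eta=1$ case admits a clean standalone argument that is more illuminating than the limit.

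For the direct argument I would reuse Steps 1 and 2 of the Theorem 1 proof sketch to obtain a fixed matrix $A \in \mathbb{R}^{n_z \times n_z}$ with $P(Z \mid X=x) = A\, Q(Z \mid X=x)$ for every $x \in \text{supp}(X)$, where $A = M_P^{+} M_Q$ is well defined because Assumption~\ref{asp:proxy_rank} makes $M_P$ of full column rank. Under Assumption~\ref{asp:weak_overlap} at $\eta=1$, for each $i \in \{1,\dots,n_z\}$ there is an anchor point $x_i$ with $P(Z \mid X = x_i) = e_i$, so $e_i = A\, Q(Z \mid X = x_i)$ expresses a standard basis vector as a convex combination of the columns of $A$. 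Since $e_i$ is an extreme point of the probability simplex, every column of $A$ receiving positive weight must itself equal $e_i$; this produces for each $i$ an index $j_i$ with $a_{j_i} = e_i$, and the $j_i$ are distinct because the $e_i$ are. Applying Assumption~\ref{asp:weak_overlap} symmetrically to $Q$ shows that $A\, e_j = P(Z \mid X = x'_j)$ at each $Q$-anchor $x'_j$, so every column of $A$ is itself a probability vector. Combining both facts, the $n_z$ columns of $A$ are a permutation of $e_1, \dots, e_{n_z}$, i.e., $A$ is a permutation matrix, and substituting back yields $P(Z = i \mid X = x) = Q(Z = \pi(i) \mid X = x)$ after relabelling the permutation.

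The main obstacle is ensuring that the two applications of Assumption~\ref{asp:weak_overlap}, one to $P$ and one to $Q$, are combined correctly. The $P$-anchor argument alone only guarantees that the columns of $A$ \emph{include} every $e_i$; without the reverse constraint that every column of $A$ is itself a probability vector, nothing prevents extra columns of $A$ from being signed or unbounded, which would permit the preimage vectors $Q(Z \mid X = x_i)$ to be arbitrary affine combinations mapping to $e_i$. Once both directions are in hand, the permutation $\pi$ is well defined because $A$ is a square matrix whose column multiset is exactly $\{e_1,\dots,e_{n_z}\}$, and the bijectivity of $\pi$ follows automatically.
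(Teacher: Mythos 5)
Your first paragraph is exactly the paper's proof: substitute $\eta=1$ into the bound of Theorem~\ref{theorem:identifiability}, observe that $\frac{1-\eta}{2\eta-1}=0$, and conclude pointwise equality. That alone suffices and matches the paper verbatim in substance.

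Your supplementary direct argument is a genuinely different route and is also sound, with one caveat about ordering. As written in your second paragraph, you invoke the extreme-point property of $e_i$ \emph{before} establishing that the columns of $A$ lie in the simplex; extremality only forces the positively weighted columns to equal $e_i$ once you already know those columns are probability vectors (otherwise $e_1=\tfrac12(2e_1-e_2)+\tfrac12 e_2$ is a counterexample). Your final paragraph shows you are aware of this, but in a clean write-up the $Q$-anchor step (each column $Ae_j=P(Z\mid X=x'_j)$ is a probability vector) must come first, followed by the $P$-anchor extremality step, and then the counting argument (for each $i$ at least one column equals $e_i$, these column sets are disjoint and nonempty, and there are only $n_z$ columns) to conclude $A$ is a permutation matrix. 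What this buys over the paper's one-liner is independence from the quantitative machinery of Case 1/Case 2 in Step 3 of Theorem~\ref{theorem:identifiability}: it gives exact identifiability directly from the linear relation and the anchor points, which is arguably more transparent. What it costs is that it silently reuses Steps 1 and 2 of the theorem's proof, so it is not shorter overall.
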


Next, we argue that as the dimensionality of $X$ increases, $\eta \to 1$, making the latent distribution more identifiable. Intuitively, higher-dimensional covariates provide more information, making it easier to distinguish between different latent classes, naturally driving $\eta$ closer to 1 and reducing overlap. This contrasts with the strict overlap assumption often used in causal inference literature, where it is typically assumed that $1 - \eta \leq P(Z \mid X) \leq \eta$ for some $\eta \in (0,1)$~\citep{van2011targeted}.
It has been shown that as dimensionality increases, satisfying strict overlap is increasingly difficult~\citep{d2021overlap}. Inspired by this, we show that as the dimensionality of $X$ increases and the features in $X$ are sufficiently informative, $\eta=\max_{x} P(Z \mid X = x)$ approaches 1. This result builds on Proposition 2 of \citet{d2021overlap}.

\begin{proposition} \label{prop:eta}
    Let \( Z \in [n_z] \) and \( X \in \mathbb{R}^{dim(X)} \). Further, assume that each feature \( X_{i} \) is sufficiently discriminative of the latent values of \( Z \), conditioned on all previous features \( X_{1:i-1} \). Specifically, for all feature indices \( i \in \{1,...,dim(X)\} \) and all latent values \( j \in [n_z] \), there exists \(\epsilon > 0\) such that, for all \( k \in [n_z] \) where \( k \neq j \),
    \begin{small}
    \(\mathbb{E}_{P(X|Z=j)}\text{KL}\left(P(X_i|X_{1:i-1},Z=j) \parallel P(X_i|X_{1:i-1}, Z=k)\right)
    \)
    \end{small} is greater than \(\epsilon\).
    Then, as \( dim(X) \rightarrow \infty \), there exists no \( \eta \in (0,1) \) such that
    \[    \max_{x} P(Z = j \mid X = x) \leq \eta \quad \forall j\]
\end{proposition}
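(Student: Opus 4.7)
The plan is to turn the expected conditional KL lower bound into a divergent lower bound on the full KL between $P(X\mid Z=j)$ and $P(X\mid Z=k)$, then use this to locate some $x$ at which the Bayes posterior for class $j$ is forced close to $1$. Starting from Bayes' rule,
\[
P(Z=j\mid X=x) \;=\; \frac{1}{1 + \sum_{k\neq j} \tfrac{P(Z=k)}{P(Z=j)}\exp\!\bigl(-L_{jk}(x)\bigr)},
\qquad L_{jk}(x) := \log\frac{P(X=x\mid Z=j)}{P(X=x\mid Z=k)},
\]
so it suffices to exhibit, for each $j$, a point $x^\star$ at which $L_{jk}(x^\star)\to\infty$ as $\dim(X)\to\infty$ simultaneously for every $k\neq j$.

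First, I would apply the chain rule for KL divergence to decompose
\[
\mathrm{KL}\!\bigl(P(X\mid Z=j)\,\Vert\,P(X\mid Z=k)\bigr) \;=\; \sum_{i=1}^{\dim(X)} \mathbb{E}_{P(X_{1:i-1}\mid Z=j)}\,\mathrm{KL}\!\bigl(P(X_i\mid X_{1:i-1},Z=j)\,\Vert\,P(X_i\mid X_{1:i-1},Z=k)\bigr).
\]
The discriminativeness hypothesis makes each summand at least $\epsilon$, so the whole KL is at least $\dim(X)\cdot\epsilon$. Rewriting the KL as $\mathbb{E}_{P(X\mid Z=j)}[L_{jk}(X)]$, the expected log-likelihood ratio diverges: $\mathbb{E}_{P(X\mid Z=j)}[L_{jk}(X)] \geq \dim(X)\cdot\epsilon \to \infty$.

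Next, I would upgrade this mean statement to an almost-sure/high-probability statement by concentration. Writing $L_{jk}$ as the telescoping sum of per-feature log-ratios $Y_i = \log\frac{P(X_i\mid X_{1:i-1},Z=j)}{P(X_i\mid X_{1:i-1},Z=k)}$, the centered sequence $Y_i - \mathbb{E}[Y_i\mid X_{1:i-1},Z=j]$ is a martingale difference sequence under $P(X\mid Z=j)$. Under the mild regularity that these increments have bounded conditional second moments (which is already implicit in the expected KL being finite and can be quoted from Proposition 2 of \citet{d2021overlap}), Chebyshev's inequality gives $P_{X\mid Z=j}\!\bigl(L_{jk}(X) < \tfrac{1}{2}\dim(X)\epsilon\bigr) = O(1/\dim(X))$. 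A union bound over the finitely many $k \neq j$ then shows that for $\dim(X)$ large enough there is a set $A_j$ with $P(A_j\mid Z=j)>0$ on which $L_{jk}(x) \geq \tfrac{1}{2}\dim(X)\epsilon$ for every $k\neq j$ at once. Any $x^\star \in A_j$ therefore satisfies $P(Z=j\mid X=x^\star) \geq \bigl(1 + (n_z-1)\max_k \tfrac{P(Z=k)}{P(Z=j)} e^{-\dim(X)\epsilon/2}\bigr)^{-1} \to 1$, so $\max_x P(Z=j\mid X=x) \to 1$ and no $\eta < 1$ can serve as a uniform upper bound, contradicting the negated conclusion.

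The main obstacle is the concentration step: the proposition as stated only controls \emph{means} of conditional KLs, not tails of $L_{jk}$, so I need to invoke (or inherit from \citet{d2021overlap}) a mild moment condition on the per-feature log-ratios to justify Chebyshev/Azuma. If one prefers to avoid that, a weaker alternative is to observe that since the mean of $L_{jk}$ diverges, there must exist at least one $x$ with $L_{jk}(x)$ arbitrarily large; the nontrivial additional work is then to argue that such $x$ can be chosen simultaneously for all $k \neq j$, for which the union-bound/concentration route appears to be the cleanest path.
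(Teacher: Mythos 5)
Your first step (chain rule for KL, giving $\mathrm{KL}\bigl(P(X\mid Z=j)\,\Vert\,P(X\mid Z=k)\bigr)\geq \dim(X)\cdot\epsilon\to\infty$) is exactly the engine of the paper's proof. The gap is in how you try to finish. Your concentration step needs control of the conditional second moments of the per-feature log-ratios, and your claim that this is ``already implicit in the expected KL being finite'' is false: finiteness of the first moment of $\log\frac{P(X_i\mid X_{1:i-1},Z=j)}{P(X_i\mid X_{1:i-1},Z=k)}$ does not bound its second moment, and no such moment condition appears in the hypotheses (nor is it supplied by Proposition 2 of d'Amour et al., whose argument does not use concentration either). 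So as written, the Chebyshev/Azuma step is unjustified, and your fallback correctly identifies that the bare divergence of the mean does not give you a single $x^\star$ that is simultaneously good for all $k\neq j$. You are also proving something strictly stronger than the statement (that $\max_x P(Z=j\mid X=x)\to 1$ for \emph{every} $j$), which is why you end up needing machinery the proposition does not support.

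The paper closes the argument by contradiction, which makes the whole concentration issue disappear. Negate the conclusion: suppose some $\eta\in(0,1)$ satisfies $\max_x P(Z=j\mid X=x)\leq\eta$ for all $j$. In the binary case the paper treats, this upper bound on one class is a lower bound $1-\eta$ on the other, so the posterior ratio, and hence the likelihood ratio $\frac{P(X=x\mid Z=0)}{P(X=x\mid Z=1)}=\frac{P(Z=0\mid X=x)}{P(Z=1\mid X=x)}\cdot\frac{P(Z=1)}{P(Z=0)}$, is bounded above \emph{pointwise} by $\frac{\eta}{1-\eta}\cdot\frac{1}{\gamma}$. Taking logs and expectations under $P(X\mid Z=0)$ bounds the full KL by a constant independent of $\dim(X)$, contradicting your (and their) chain-rule lower bound $\dim(X)\cdot\epsilon$. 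No tail bounds, no union bound, no search for a witness $x^\star$. If you want to keep your direct route, you must either add an explicit moment assumption on the log-likelihood-ratio increments or switch to the contradiction argument; note also that the paper itself only carries out the argument for $n_z=2$, since for $n_z\geq 3$ the uniform upper bound $\eta$ no longer yields a positive lower bound on every posterior.
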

The proof of this proposition is provided in Appendix~\ref{appendix:proposition_proof}.

\section{METHODOLOGY}
\label{sec:methods}
In this section, we formally introduce our proposed framework. Section~\ref{subsection:training} outlines the training phase, where we first estimate the latent conditional distribution \(\Ptr{Z|X}\) and then learn the predictive distribution \(\Ptr{Y|X}\). Section~\ref{subsection:testing} describes the inference phase, where we estimate the shifted latent marginal \(\Pte{Z}\) and use it to derive \(\Pte{Y|X}\) for prediction.

Our methodology applies uniformly to both Proxy OOD and Multi-Source OOD tasks. For clarity, we detail the process in the Proxy OOD setting, as the same steps extend to the Multi-Source OOD case.
\begin{figure*}[t]
    \vspace{-10pt}
    \centering
    \includegraphics[scale=0.35]{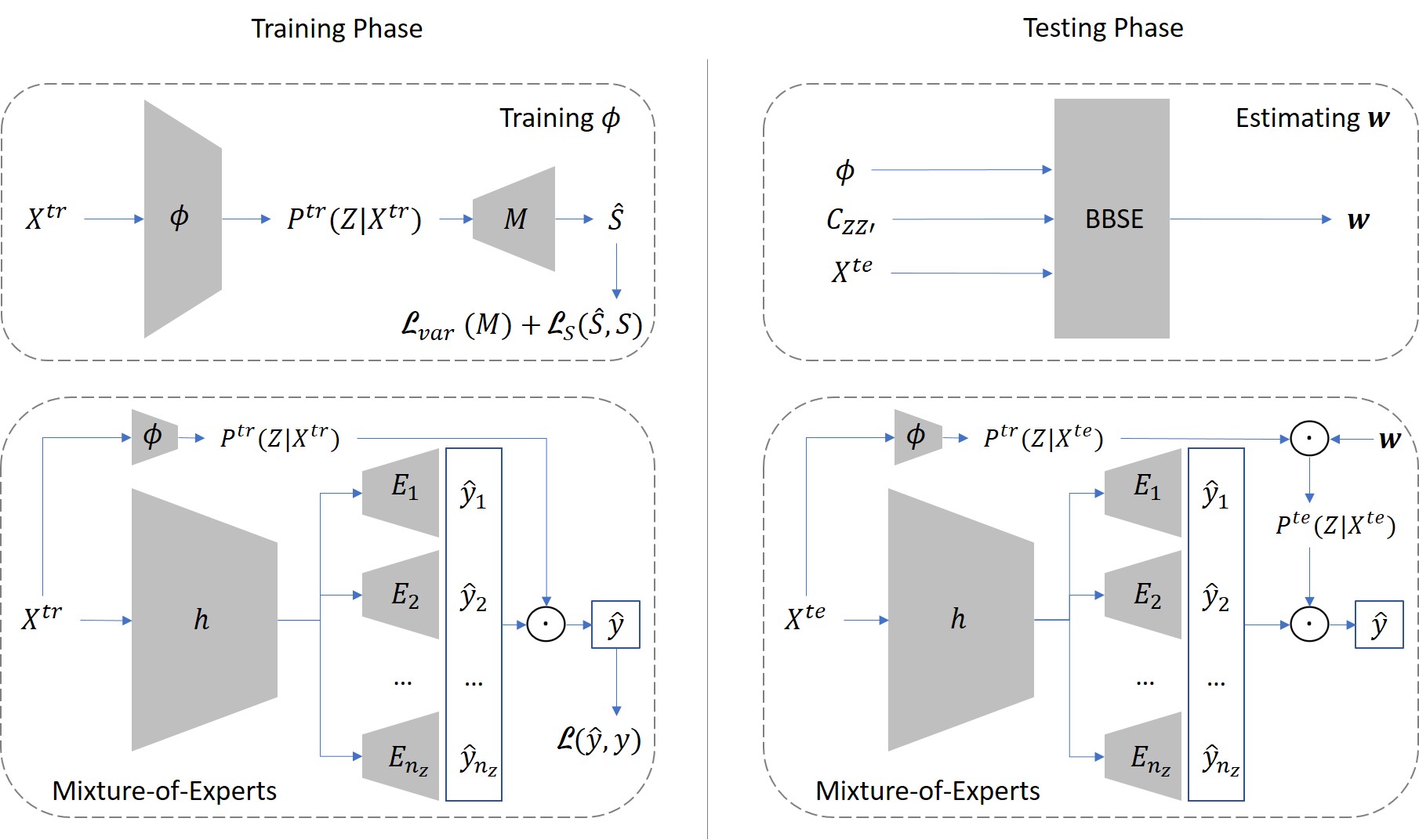}
    \caption{This figure illustrates the methodology of our proposed model, which is structured into two primary phases: the Training Phase, where we first estimate the latent distribution and subsequently train a mixture-of-experts model for prediction; and the Testing Phase, where the model estimates \(\frac{\Pte{Z}}{\Ptr{Z}}\) (\(\mathbf{w}\)) and reweights the gating function to predict the output for test input \(X^{te}\).}
    \label{im:framework}
    \vspace{-15pt}
\end{figure*}

\subsection{Training}
\label{subsection:training}
The training procedure consists of two stages. First, we train an encoder-decoder model to infer the latent confounder distribution \( \Ptr{Z \mid X} \). This step factorizes the observed conditional distribution \( \Ptr{S \mid X} \) into \( \Ptr{Z \mid X} \) and \( P(S \mid Z) \), using appropriate regularization to ensure identifiability. Next, we train a mixture-of-experts (MoE) model to estimate \( \Ptr{Y \mid X} \). The inferred distribution \( \Ptr{Z \mid X} \) weights the experts, so each expert specializes in distinct latent classes and effectively captures data heterogeneity.

\subparagraph{Estimating Latent Distribution:}
Encoder-decoder models are often used to obtain the latent distributions~\citep{kingma2013auto}. We design an encoder-decoder model based on the independence constraints for our task (Assumption~\ref{asp:structural_constraints}).

Assumption~\ref{asp:structural_constraints} gives us \( S \perp X \mid Z \). Therefore, \( \Ptr{S \mid X = x} \) can be factorized as follows:
\begin{small}
\begin{equation}
    \Ptr{S \mid X = x} = \sum_{z \in [n_z]} P(S \mid Z = z) \Ptr{Z = z \mid X = x}, \label{eq:fac}
\end{equation}
\end{small}
\\
This factorization decomposes the distribution \( \Ptr{S \mid X} \) into two components: \( \Ptr{Z \mid X} \) and \( P(S \mid Z) \). The latent variable \( Z \) serves as an intermediate bottleneck that captures the information of \(X\) relevant to the prediction of \(S\). We parameterize \( \Ptr{Z \mid X} \) using a neural network encoder \( \phi: \mathbb{R}^{dim(X)} \rightarrow [0,1]^{n_z} \). The encoder transforms \( X \) into a distribution over \( Z \) using a softmax function in the final layer. The conditional distribution \( P(S \mid Z) \), which is a discrete distribution, is then modeled by a matrix \( M \in [0,1]^{n_z \times n_s} \) where each entry \(M_{zs}\) represents \(P(S = s \mid Z = z)\). This acts as a decoder mapping \( Z \) to \( S \).

The parameters \(\phi\) and \(M\) are learned by solving the following optimization problem:
\begin{equation}
\phi^*, M^* = \arg\min_{\phi, M} \mathbb{E}_{(X, S) \sim \mathcal{D}^{\text{tr}}} \left[ \mathcal{L}_{S}(M^\intercal \phi(X), S) \right] \label{eq:recloss}
\end{equation}

where \( M^\intercal \phi(x) = \sum_{z} \phi_z(x) M_{z,:} \), \( \phi_z(x) \) denotes the \( z^{\text{th}} \) component of \( \phi(x) \), and \( M_{z,:} \) represents the \( z^{\text{th}} \) row of \( M \) and \(\mathcal{L}_{S}\) is the loss function.

The factorization in \Cref{eq:fac} is generally not unique, since multiple pairs $(M, \phi)$ can minimize the loss defined in \Cref{eq:recloss}. However, under \Cref{asp:weak_overlap} with \(\eta = 1\), unique identification becomes possible. Recall that this assumption requires the existence of at least one \(x \in \text{supp}(X)\) for each latent class \(j\) satisfying \(P(Z = j \mid X = x) = 1\). Directly verifying this assumption is computationally infeasible for large datasets due to the intractable number of potential \(X\) values. To address this issue, we introduce a regularization term on the matrix \(M\). This term computes the maximum variance among the rows in \(M\). The following theorem shows that this enforces Assumption~\ref{asp:weak_overlap} thereby ensuring unique identification of the latent distribution.

\begin{theorem} \label{theorem:regularization}
Consider the set \( \mathcal{M} \), which denotes the set of all matrices \( M \) associated with some distribution \( P(S \mid Z) \) for which there exists a \(\phi(\cdot)\) associated with density \( \Ptr{Z \mid X} \) such that \( M \) and \(\phi(\cdot)\) are minimizers of the reconstruction loss in \Cref{eq:recloss}. Let \( M^* \in \mathcal{M} \) be the matrix associated with a \( Z \) that satisfies Assumption~\ref{asp:weak_overlap} for \(\eta = 1\). Then it holds that
\[
 \mathcal{L}_{\text{var}}(M^{*}) \leq \mathcal{L}_{\text{var}}(M), \quad \forall M \in \mathcal{M},
\]
where
\begin{equation}
    \label{eq:Lvar}
\mathcal{L}_{\text{var}}(M) = \max_{z \in [n_z]} \left( \frac{1}{n_s} \sum_{s=1}^{n_s} \left(M_{zs} - \frac{1}{n_s}\right)^2 \right)
\end{equation}
\end{theorem}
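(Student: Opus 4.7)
The plan is to reduce the theorem to a short convexity argument, leveraging Assumption~\ref{asp:weak_overlap} with $\eta=1$ to identify each row of $M^*$ with an observed conditional distribution, and then using that every competing minimizer must express this same conditional as a convex combination of its own rows.

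First, I would unpack the structural consequence of Assumption~\ref{asp:weak_overlap} for $M^*$: for each latent class $i\in[n_z]$ there exists $x_i\in\mathrm{supp}(X)$ with $\Ptr{Z=i\mid X=x_i}=1$, so the encoder $\phi^*$ associated with $M^*$ satisfies $\phi^*(x_i)=e_i$. Substituting into the factorization in \Cref{eq:fac} gives
$$\Ptr{S\mid X=x_i} \;=\; \sum_{z\in[n_z]} P(S\mid Z=z)\,\phi^*_z(x_i) \;=\; M^*_{i,:},$$
so every row of $M^*$ is realized as an observed conditional $\Ptr{S\mid X=x_i}$.

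Next, for any other $(M,\phi)\in\mathcal{M}$, the fact that $(M,\phi)$ minimizes the reconstruction loss in \Cref{eq:recloss} forces $M^\intercal\phi(x)=\Ptr{S\mid X=x}$ almost everywhere, provided $\mathcal{L}_S$ is a strictly proper scoring rule such as cross-entropy. Evaluating this identity at $x=x_i$ yields
$$M^*_{i,:} \;=\; \Ptr{S\mid X=x_i} \;=\; \sum_{z\in[n_z]}\phi_z(x_i)\,M_{z,:},$$
which exhibits $M^*_{i,:}$ as a convex combination of the rows of $M$ with weights $\phi(x_i)$ summing to one.

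Finally, I would close by observing that the map $V(v)=\tfrac{1}{n_s}\sum_{s}(v_s-\tfrac{1}{n_s})^2$ is a convex quadratic on $\mathbb{R}^{n_s}$, so by convexity
$$V(M^*_{i,:}) \;\le\; \sum_{z}\phi_z(x_i)\,V(M_{z,:}) \;\le\; \max_{z\in[n_z]} V(M_{z,:}) \;=\; \mathcal{L}_{\text{var}}(M),$$
and taking the maximum over $i$ on the left gives $\mathcal{L}_{\text{var}}(M^*)\le\mathcal{L}_{\text{var}}(M)$. The only nontrivial step is the equality $M^\intercal\phi=\Ptr{S\mid X}$ at every minimizer; this is standard for proper losses but needs an explicit hypothesis on $\mathcal{L}_S$ to rule out degenerate minimizers. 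Everything else is bookkeeping plus one application of convexity.
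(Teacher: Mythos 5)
Your proposal is correct and follows essentially the same route as the paper's proof: identify each row of $M^*$ with an observed conditional $\Ptr{S\mid X=x_i}$ via the $\eta=1$ overlap condition, write that row as a convex combination of the rows of any competing $M$, and bound the row variance by the maximum via convexity (the paper packages this last step as a lemma proved with the AM--RMS inequality, which is the same Jensen argument you invoke directly). Your explicit remark that $\mathcal{L}_S$ must be a proper loss so that every minimizer satisfies $M^\intercal\phi(x)=\Ptr{S\mid X=x}$ is a hypothesis the paper uses implicitly without stating; flagging it is a minor improvement rather than a divergence.
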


\Cref{theorem:regularization} states that under \Cref{asp:weak_overlap} for \(\eta = 1\), the true \(M\) has the minimum highest row variance. The formal proof is provided in Appendix~\ref{appendix:regularization_proof}.

Therefore, our final optimization objective is
\begin{equation}
\label{eq:final}
   \phi^*, M^* = \arg\min_{\phi, M} \mathcal{L}_{S}(M^T \phi(X), S) + \lambda \cdot \mathcal{L}_{\text{var}}(M)
\end{equation}
with $\mathcal{L}_{S}$ as in \Cref{eq:recloss} and $\mathcal{L}_{\text{var}}$ as in \Cref{eq:Lvar}.

\paragraph{Mixture-of-Experts model.}
The predictive distribution \(\Ptr{Y \mid X}\) can be written as:
\[
\Ptr{Y \mid X} = \sum_{i=1}^{n_z} \Ptr{Z = i \mid X} P(Y \mid Z = i, X)
\]
This motivates us to use a mixture-of-experts (MoE) model to model the posterior predictive distributions with estimated latent distribution \( \Ptr{Z \mid X} \) functioning as the mixture weights.

MoE models consist of multiple expert neural networks \( E_i(x) \) whose outputs are weighted and combined by a gating function based on input features~\citep{jacobs1991adaptive}. In this paper, the mixture distribution \( P^{\text{tr}}(Z \mid X) \) serves as the gating mechanism, assigning weights to each expert's output. Consequently, each expert \( E_i(\cdot) \) estimates the conditional distribution \( P(Y \mid Z = i, X) \), making each expert specialized for a specific latent class. However, training separate experts for each value of \( Z \) may lead to overfitting, particularly when certain latent classes have limited samples. To mitigate this, we share all layers across the experts except for the final layer. The shared layers learn a common representation of the input features \( X \), following the intuition in \citet{kirichenko2022last} that hidden representations up to the last layer are sufficient to adapt to different latent classes.

The prediction for a particular input \( x \) by our MoE model is given by:
\[
\text{MoE}(x) = \sum_{i=1}^{n_z} \Ptr{Z = i \mid X = x} E_i(h(x)),
\]
Here, \( h(x) \) denotes the shared lower layers that learn a common representation of the input features, while \( E_i(h(x)) \) represents the output of the \( i \)-th expert network. The mixture weights \( P(Z = i \mid X = x) \) determine the contribution of each expert's output to the final prediction. Figure~\ref{im:framework} shows an illustration of the architecture.

Training this mixture model is key to our approach as it captures heterogeneity more effectively than treating \( Z \) merely as an additional input feature. This advantage arises from explicitly factoring the predictive model into the gating function \( \Ptr{Z \mid X} \) and the expert distributions \(P(Y | Z, X)\). In both of our tasks, distributional shifts involving the latent confounder affect only the gating function \( \Ptr{Z \mid X} \), while the conditional distribution \(P(Y | Z, X)\) remains invariant across training and test settings. Consequently, adapting the model to shifts in the latent confounder can be accomplished by appropriately re-weighting the gating function without modifying the expert distributions.

\subsection{Testing}
\label{subsection:testing}
During inference, there is a shift in \( P(Z) \) from train to test, meaning that \( \Ptr{Z \mid X} \neq \Pte{Z \mid X} \). This shift affects the gating function of our MoE model. Since \( Z \) causes \( X \), \( \Ptr{X \mid Z} = \Pte{X \mid Z} \). This scenario is analogous to a label shift task, where \( P(X \mid Y) \) stays invariant but there is a shift in \( P(Y) \) from train to test~\citep{Alabdulmohsin2023}. Consequently, techniques developed for label shift can be applied to adjust for the shift in \(Z\). In this case, \(\Pte{Z \mid X}\) can be obtained by re-weighting \(\Ptr{Z \mid X}\) as follows:

\begin{equation}
\begin{aligned}
\Pte{Z=z \mid X} & = \frac{\Ptr{Z=z \mid X}\frac{\Pte{Z=z}}{\Ptr{Z=z}}}{\sum_{z'} \Ptr{Z=z' \mid X}\frac{\Pte{Z=z'}}{\Ptr{Z=z'}}}
\end{aligned}
\end{equation}
This re-weighting, however, relies on the density ratio \(\frac{\Pte{Z}}{\Ptr{Z}}\) which we denote by (\(\mathbf{w}\)). We use Black Box Shift Estimation (BBSE), a method-of-moments approach introduced by \citet{Lipton2018} to estimate this ratio. The intuition behind their approach is that, since \( Z \) causes \( X \), \( P(f(X) \mid Z) \) remains invariant from training to testing for any function \( f(X) \). By observing the changes in the marginal distribution of \( f(X) \), we can accurately estimate the shift in \( P(Z) \).

The motivation for learning \(\Pte{Z \mid X}\) through re-weighting \(\Ptr{Z \mid X}\) instead of estimating \( Z \) with our encoder-decoder framework from test data is twofold:
1) We aim to capture and adapt to changes in the underlying distribution of \(Z\) between training and test phases.
2) We do not assume access to proxy variables or multi-source data during the test phase.

\section{EXPERIMENTS}\label{sec:results}
In this section, we demonstrate the effectiveness of our approach on both synthetic and real data.

\paragraph{Baselines}
We compare our method against several baselines, focusing on out-of-distribution (OOD) and domain adaptation techniques. For OOD methods, we include Invariant Risk Minimization (IRM)~\citep{IRM}, Variance Risk Extrapolation (VREx)~\citep{krueger2021out}, and GroupDRO~\citep{DROwithlabel}, which have access to the latent confounder during training. For domain adaptation, we consider Domain-Adversarial Neural Networks (DANN)~\citep{dann} and DeepCORAL~\citep{deepcoral}. Additionally, we include Proxy Domain Adaptation (ProxyDA)~\citep{ProxyDA} as a proxy-based domain adaptation method. It is important to note that some methods have access to test data during training: ProxyDA, DeepCORAL, and DANN use test data for domain adaptation purposes. Furthermore, GroupDRO, IRM, and VREx utilize ground-truth \(Z\) labels during training, providing them with additional information about the dataset structure. ProxyDA requires both proxies and source/domain labels, but our approach only uses one. To ensure a fair comparison for tasks, we provide ProxyDA with random noise for the variable which is not available. Further details on the baselines and datasets can be found in Appendix~\ref{appendix:implementation_details}.

\paragraph{Synthetic Data}
For the proxy OOD task, we define \( Z \in \{0, 1, 2\} \), with \( S \) (proxy) \( \in \{0, 1, 2\} \), and \( X \in \mathbb{R}^3 \), while \( Y \in \{0, 1\} \). The distribution of \( Z \) shifts from \( P(Z) = [0.15, 0.35, 0.5] \) during training to \( P(Z) = [0.7, 0.2, 0.1] \) during testing. For the multi-source OOD task, \( Z \in \{0, 1, 2\} \), \( X \in \mathbb{R}^3 \), and \( Y \in \{0, 1\} \). The training datasets have different distributions of \( P(Z) \), namely \( [0, 0.7, 0.3], [0.8, 0.1, 0.1], \text{ and } [0.1, 0.0, 0.9] \), with the test distribution set to \( P(Z) = [0.7, 0.2, 0.1] \). These tasks assess our method’s robustness to significant shifts in \( Z \). For both tasks, the conditional distribution \( P(Y \mid X, Z) \) is modeled as a Bernoulli distribution, where the parameter is a function of \( X \) and \( Z \), with significantly different values for each \( Z \). Further details are provided in Appendix~\ref{appendix:synthetic_data}.

\paragraph{Performance on Synthetic Data}

\begin{table}[ht]
    \vspace{-10pt}
    \centering
    \caption{Performance on Synthetic Data}
    \label{tab:synthetic_data}
    \resizebox{\columnwidth}{!}{
    \begin{tabular}{lrr}
        \toprule
        \multirow{2}{*}{Method} & \multicolumn{1}{c}{Proxy OOD Task} & \multicolumn{1}{c}{Multi-source OOD Task} \\
        \cmidrule(lr){2-2} \cmidrule(lr){3-3}
         & ACC OOD $\uparrow$ & ACC OOD $\uparrow$ \\
        \midrule
        ERM & 0.484 (0.001) & 0.466 (0.006) \\
        GroupDRO & 0.465 (0.011)  & 0.443 (0.016) \\
        IRM & 0.465 (0.001)  & 0.453 (0.005)  \\
        DeepCORAL & 0.463 (0.010) &  0.464 (0.005)\\
        VREx & 0.523 (0.034) & 0.490 (0.027)  \\
        DANN & 0.467 (0.007)  & 0.459 (0.005) \\
        ProxyDA & 0.485 (0.047)  & 0.462 (0.012)  \\
        Ours (\(\lambda = 0\)) & 0.870 (0.021) & \textbf{0.794 (0.037)} \\
        Ours & \textbf{0.896 (0.002)} & \textbf{0.811 (0.039)} \\
        \bottomrule
    \end{tabular}}
    \vspace{-10pt}
\end{table}

\begin{table*}[ht]
    \centering
    \caption{Performance on Real Data (Proxy and Multi-source OOD tasks)}
    \label{tab:real_data}
    \begin{tabular}{lccccc}
        \toprule
        \multirow{2}{*}{Method} & Employment & Income & Mobility & Public Coverage & Travel Time \\
        \cmidrule(lr){2-2} \cmidrule(lr){3-3} \cmidrule(lr){4-4} \cmidrule(lr){5-5} \cmidrule(lr){6-6}
         & ACC OOD $\uparrow$ & ACC OOD $\uparrow$ & ACC OOD $\uparrow$ & ACC OOD $\uparrow$ & ACC OOD $\uparrow$ \\
        \midrule
        ERM & 0.670 (0.005) & 0.805 (0.008) & 0.755 (0.002) & 0.776 (0.006) & 0.552 (0.024) \\
        GroupDRO & 0.654 (0.014) & 0.807 (0.026) & 0.745 (0.008) & 0.762 (0.004) & 0.492 (0.023) \\
        IRM & 0.663 (0.006) & 0.819 (0.022) & 0.751 (0.002) & 0.780 (0.002) & 0.562 (0.000) \\
        DeepCORAL & 0.664 (0.007) & 0.776 (0.021) & 0.741 (0.000) & 0.752 (0.031) & 0.526 (0.040) \\
        VREx & \textbf{0.702 (0.014)} & 0.851 (0.008) & 0.739 (0.018) & 0.758 (0.014) & 0.492 (0.023) \\
        DANN & 0.663 (0.007) & 0.773 (0.031) & 0.750 (0.005) & 0.779 (0.000) & 0.559 (0.010) \\
        ProxyDA & 0.689 (0.006) & 0.856 (0.040) & 0.657 (0.073) & NaN & 0.559 (0.039) \\
        Ours & \textbf{0.709 (0.005)} & \textbf{0.883 (0.001)} & \textbf{0.758 (0.008)} & \textbf{0.809 (0.003)} & \textbf{0.658 (0.001)} \\
        \bottomrule
    \end{tabular}
\end{table*}

Table \ref{tab:synthetic_data} shows the comparative performance on synthetic datasets. We report test accuracy on out-of-distribution tasks, presenting the mean accuracy over five runs along with the standard deviation. Our method significantly outperforms all baselines on both datasets. This is because most baselines rely on a single predictor, which cannot capture the heterogeneity in \( P(Y \mid X) \). For ProxyDA, since only one of the proxy or multiple sources is available, it is unable to effectively adapt to the shift. Additionally, we observe an improvement in performance with the regularizer.

\paragraph{Real Data}
For real data, we use datasets from \citet{ding2021retiring}, covering prediction tasks related to income, employment, health, transportation, and housing. These datasets span multiple years and all states of the United States, allowing researchers to study temporal shifts and geographic variations~\citep{liu2024need}. The input (\( X \)) includes features such as education, occupation, marital status, and demographic attributes like sex and race.

\paragraph{Proxy OOD task}
For the Proxy OOD task, we evaluate on the ACS Employment, ACS Public Coverage, and ACS Travel Time datasets. We consider a latent confounder ($Z$) that varies across datasets, with $Z = 1$ representing the minority group and $Z = 0$ representing the majority group. The distribution of $Z$ during training is $\Ptr{Z = 0} = 0.95$ and $\Ptr{Z = 1} = 0.05$, while the test distribution flips to $\Pte{Z = 0} = 0.05$ and $\Pte{Z = 1} = 0.95$. The specific datasets are:

\begin{itemize}
    \item \textbf{ACS Employment:} This dataset predicts employment status. We consider disability status as the unobserved confounder. Proxies such as public insurance coverage and independent living status ($S$) are used to infer disability.

    \item \textbf{ACS Public Coverage:} This dataset predicts whether an individual is covered by public health insurance. We consider 'age' as the unobserved confounder and use a synthetic proxy variable.

    \item \textbf{ACS Travel Time:} This dataset predicts whether an individual has a commute to work that is longer than 20 minutes. We consider 'state' as the unobserved confounder and use a synthetic proxy variable.
\end{itemize}

\paragraph{Multi-source OOD task}
For the Multi-source OOD task, we evaluate on the ACS Income and ACS Mobility datasets. We consider the state of residence ($Z$) as the latent confounder across both datasets, with varying distributions between training and testing sets. The specific datasets are:

\begin{itemize}
    \item \textbf{ACS Income:} This dataset predicts individual income. We train on three synthetic splits: 20\% Puerto Rico (PR) and 80\% California (CA); 20\% South Dakota (SD) and 80\% California (CA); and 20\% PR, 20\% SD, and 60\% CA. Testing is performed on a split of 90\% PR, 5\% SD, and 5\% CA.

    \item \textbf{ACS Mobility:} This dataset predicts whether an individual had the same residential address one year ago. We consider 'state' as the unobserved confounder, similar to ACS Income. We use the same training and testing distributions as ACS Income.
\end{itemize}

Full details for the datasets are provided in Appendix~\ref{appendix:real_data}.

\paragraph{Performance on Real Data}

Table \ref{tab:real_data} shows the comparative performance on the five datasets. We report test accuracy on out-of-distribution tasks, presenting the mean accuracy over five runs along with the standard deviation. We provide AUC scores in Appendix~\ref{appendix:additional_auc}. Notably, our method outperforms all baselines in both datasets, demonstrating its robust ability to handle shifts in $P(X, Y)$ caused by latent confounder shifts. For ACS Public Coverage, we were unable to compare with ProxyDA as our data does not meet the necessary full-rank conditions. We discuss these conditions for ProxyDA in Appendix~\ref{appendix:assumption_discussion}.

\paragraph{Scalability}

\begin{figure}[h]
    \centering
    \resizebox{\linewidth}{!}{\input{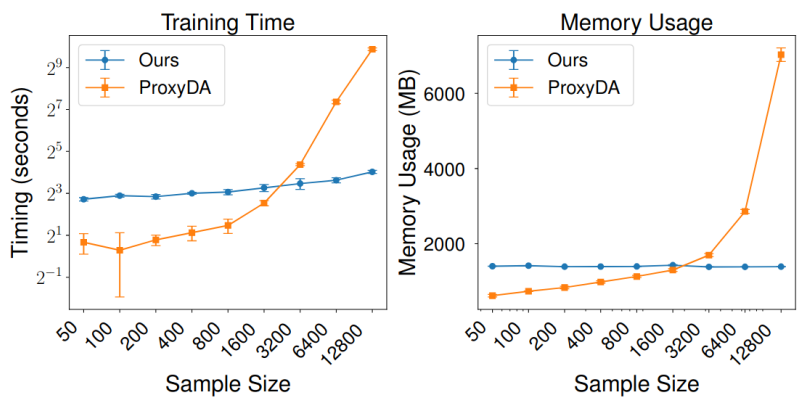}}
    \caption{Training time and memory usage for Our method and ProxyDA}
    \label{fig:timeandmemory}
\end{figure}

To evaluate the scalability of our approach, we measured memory consumption and runtime during training on the Proxy task with synthetic data for different sample sizes. Figure~\ref{fig:timeandmemory} compares our method with ProxyDA, showing peak memory usage and runtime as functions of sample size. ProxyDA, as expected, shows cubic time complexity, typical of kernel methods. In contrast, our approach theoretically has linear time complexity for a fixed number of iterations. In practice, early stopping reduces the number of iterations for larger sample sizes, thereby lowering runtime further as seen in the plot. Regarding memory, our method's memory usage stays constant while ProxyDA's memory usage increases with sample size. This makes it infeasible for ProxyDA to scale to datasets with large sample sizes.

\section{CONCLUSION} \label{sec:conclusions}
In this work, we tackled the challenging task of out-of-distribution generalization in the presence of unobserved confounders. We introduced a simple yet scalable method that first estimates the distribution of latent confounder and then adapts the classifier to the test distribution. Our approach not only theoretically proves the identifiability of the latent distribution but also empirically outperforms existing baselines across multiple tasks. This demonstrates the robustness and practical effectiveness of our method, providing a strong foundation for future research in developing OOD-robust techniques that handle unobserved confounders.

While our method operates under relatively simple assumptions, there may be scenarios where these assumptions fail to hold, such as when the confounder is not discrete. Future work should consider understanding and identifying these limitations and build upon our proposed method.

\subsubsection*{Acknowledgments}
The authors would like to thank the anonymous reviewers for valuable comments and suggestions. We would also like to acknowledge the support from NSF awards IIS-2340124, CCF-1918483, CAREER IIS-1943364 and CNS-2212160 and NIH grant U54HG012510.

\renewcommand{\bibsection}{\subsubsection*{References}}
\bibliographystyle{plainnat}
\bibliography{ref}

\appendix
\onecolumn
\aistatstitle{Supplementary Materials}
\section{PROOFS}
\label{appendix:proofs}
\subsection{Proof of Theorem 1}
\label{appendix:approximate_identifiability_proof}
\renewcommand{\thetheorem}{\ref{theorem:identifiability}}
\begin{theorem}[Approximate Identifiability of Latent Variables]
Let $P(X, S, Z)$ be the true distribution and $Q(X, S, Z)$ be any other distribution over the observed variables $X$, proxy variables $S$, and latent variables $Z$. Suppose that the marginal distributions over $X$ and $S$ are identical, i.e., $P(X, S) = Q(X, S)$, and that Assumptions~\ref{asp:structural_constraints}-~\ref{asp:weak_overlap} are satisfied for both $P$ and $Q$. Then, there exists a permutation $\pi$ of $\{1, 2, \dots, n_z\}$ such that for each $i \in \{1, 2, \dots, n_z\}$,  conditional distributions $P(Z \mid X = x)$ and $Q(Z \mid X = x)$ satisfy the following bound:

\[
\sup_{x \in \text{supp}(X)} \left| P(Z = i \mid X = x) - Q(Z = \pi(i) \mid X = x) \right| \leq O\left( \frac{1 - \eta}{2\eta - 1} \right)
\]

where $\eta \in \left( \frac{1}{2}, 1 \right]$ is as defined in Assumption \ref{asp:weak_overlap}.

\end{theorem}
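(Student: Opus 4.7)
The plan is to follow the three-step structure indicated in the authors' own sketch, while keeping close track of how the factor $1/(2\eta-1)$ arises.

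First I would establish the intrinsic dimensionality. Assumption~\ref{asp:structural_constraints} gives $P(S\mid X=x) = M_P^\top p_P(x)$, where $p_P(x) = (P(Z=z\mid X=x))_z \in \Delta^{n_z-1}$ and $M_P$ has full row rank $n_z$ by Assumption~\ref{asp:proxy_rank}. Assumption~\ref{asp:weak_overlap} then supplies witnesses $x_1',\ldots,x_{n_z}'$ for $Q$ such that the matrix $W$ with columns $p_Q(x_j')$ has diagonal entries $\ge\eta>\tfrac{1}{2}$ and off-diagonals in each column summing to $\le 1-\eta$. Writing $W=I+E$ with $\|E\|_1 \le 2(1-\eta)$, a Neumann-series argument gives invertibility and $\|W^{-1}\|_1 \le 1/(2\eta-1)$; in particular the vectors $\{P(S\mid X=x)\}_x$ span an $n_z$-dimensional subspace, so $Q$ must use the same $n_z$.

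Second, I would derive the linear relation. Since $P(X,S)=Q(X,S)$ with $P(X)=Q(X)$, we get $M_P^\top p_P(x) = M_Q^\top p_Q(x)$, so $p_P(x) = A\, p_Q(x)$ for $A := (M_P M_P^\top)^{-1} M_P M_Q^\top$, an $n_z \times n_z$ matrix. Two facts about $A$ matter: (i) $\mathbf{1}^\top A = \mathbf{1}^\top$ (column-stochasticity), because both $A p_Q(x)$ and $p_Q(x)$ are probability vectors and the $p_Q(x)$ span $\mathbb{R}^{n_z}$; and (ii) $\|A\|_1 \le 1/(2\eta-1)$, obtained by writing $A = V'W^{-1}$ where $V'$ has columns $p_P(x_j')$ (a column-stochastic nonnegative matrix with $\|V'\|_1=1$).

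Third, I would approximate $A$ by a permutation using the symmetry between the roles of $P$ and $Q$. Let $V$ be the matrix with columns $p_P(x_i)$ from weak overlap of $P$, so $V \approx I$ in the same sense as $W$, and let $U$ have columns $p_Q(x_i)$; then $AU=V$ and $A^{-1} = U V^{-1}$, giving $\|A^{-1} - U\|_1 \le 2(1-\eta)/(2\eta-1)$. The analogous bound $\|A - V'\|_1 \le 2(1-\eta)/(2\eta-1)$ follows from $A = V'W^{-1}$. Multiplying and using $A A^{-1}=I$ yields $V'U = I + O\!\bigl((1-\eta)/(2\eta-1)\bigr)$, where $V',U$ are nonnegative column-stochastic. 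From this I would conclude that $V'$ is $O\!\bigl((1-\eta)/(2\eta-1)\bigr)$-close to a permutation matrix $P_\pi$: the trace identity $\sum_{i,k} V'_{ik}(1-U_{ki}) = n_z - \mathrm{tr}(V'U) = O(n_z(1-\eta)/(2\eta-1))$, combined with column-stochasticity of $U$, forces each significant mass of $V'$ to sit on a single row and in a bijective pattern across columns, yielding the permutation $\pi$. A triangle inequality with $\|A-V'\|_1$ then gives $\|A - P_\pi\|_1 \le O((1-\eta)/(2\eta-1))$, which in turn bounds $\sup_x |P(Z=i\mid X=x) - Q(Z=\pi(i)\mid X=x)|$ by the same order for every $x$, since $p_P(x) - P_\pi p_Q(x) = (A - P_\pi) p_Q(x)$ and $\|p_Q(x)\|_1 = 1$.

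The main obstacle I anticipate is the last substep: turning the approximate identity $V'U \approx I$ between two column-stochastic nonnegative factors into an approximate permutation structure on each factor while keeping the error linear in $(1-\eta)/(2\eta-1)$. A cleaner alternative I would keep in reserve is to exploit weak overlap directly at the level of columns: for each $j$, $A e_j = p_P(x_j') + A(e_j - p_Q(x_j'))$, where the second term has $\ell_1$-norm at most $\|A\|_1 \cdot 2(1-\eta) \le 2(1-\eta)/(2\eta-1)$, so each column of $A$ is close to the probability vector $p_P(x_j')$; applying the symmetric statement to $A^{-1}$ and invoking nonnegativity of both approximants lets one extract a single dominant coordinate per column and define $\pi$ without ever explicitly bounding $\mathrm{tr}(V'U)$.
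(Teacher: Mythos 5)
Your proposal is sound and reaches the theorem, but it takes a genuinely different route from the paper in the decisive third step. The paper works entrywise on $A=M_P^{\dagger}M_Q^{\top}$: it uses the fact that $\sum_i A_{ri}Q(Z=i\mid X=x)\in[0,1]$ together with the $Q$-witnesses to pin every entry into $\left[-\tfrac{1-\eta}{2\eta-1},\,1+\tfrac{1-\eta}{2\eta-1}\right]$, uses the $P$-witnesses to force a row entry $\geq\eta$, and then runs a combinatorial case split on $\eta$ versus $\tfrac{n_z+1}{n_z+2}$ to show each column has exactly one large entry, which defines $\Pi$. You instead package the witnesses into matrices $W,V\approx I$, control $\|W^{-1}\|_1,\|V^{-1}\|_1\leq\tfrac{1}{2\eta-1}$ by a Neumann series, deduce $\|A-V'\|_1$ and $\|A^{-1}-U\|_1\leq\tfrac{2(1-\eta)}{2\eta-1}$, and reduce to the lemma that two nonnegative column-stochastic matrices whose product is $\epsilon$-close to $I$ are each $O(n_z\epsilon)$-close to a permutation. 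Your route buys a cleaner conceptual statement (a quantitative version of ``a nonnegative matrix with nonnegative inverse is monomial'') and reusable norm bounds; the paper's buys explicit constants and avoids invoking invertibility of $A$ and the auxiliary witness matrices. The step you flag as the main obstacle does go through with your trace identity: from $\sum_{i,k}V'_{ik}(1-U_{ki})\leq n_z\|V'U-I\|_1$ with all terms nonnegative and $\sum_i V'_{ik}=1$, each column $k$ admits an index $\sigma(k)$ with $U_{k\sigma(k)}\geq 1-O(n_z\delta)$, column-stochasticity of $U$ makes $\sigma$ injective once $\delta$ is below a threshold depending on $n_z$ (above that threshold the theorem is vacuous, which is exactly the paper's Case 1 and the one regime you should state explicitly, since your $\delta^2$ cross-term is only $O(\delta)$ there by triviality), and then $U$ is close to a permutation, hence so is $V'$ and so is $A$. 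The only other nit is in your linear-relation step: the $p_Q(x)$ span $\mathbb{R}^{n_z}$ linearly precisely because $W$ is invertible, so state that dependence before concluding $\mathbf{1}^{\top}A=\mathbf{1}^{\top}$.
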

\renewcommand{\thetheorem}{\arabic{theorem}}

\begin{proof}
The proof follows three main steps:
\begin{enumerate}
    \item We first establish there is a unique value of \(n_z\) that is compatible with a given observed distribution \(P(X, S)\).
    \item We show that the conditional distributions $P(Z \mid X = x)$ and $Q(Z \mid X = x)$ are related through a linear transformation. We further show that this linear transformation must satisfy certain constraints.
    \item Finally, using Assumption~\ref{asp:weak_overlap}, which limits the overlap between latent variables, we show that $A$ can be approximated by a permutation matrix with an error bounded by $O\left( \frac{1 - \eta}{2\eta - 1} \right)$. This establishes a bound on the difference between $P(Z \mid X = x)$ and $Q(Z \mid X = x)$.

\end{enumerate}
\paragraph{1. Establishing Dimension of the Subspace Spanned by $P(S \mid X = x)$ = $n_z$}\textbf{ }\\

Under Assumption~\ref{asp:structural_constraints}, we have that $S \perp X \mid Z$. Therefore, the conditional distribution can be expressed as:
\[
P(S \mid X = x) = \sum_{z=1}^{n_z} P(S \mid Z = z) P(Z = z \mid X = x).
\]
Let us consider $P(S \mid Z)$ as a matrix $M_P \in \mathbb{R}^{n_z \times n_s}$ where each row corresponds to $P(S \mid Z = z)$ for $z \in \{1, 2, \dots, n_z\}$. By Assumption~\ref{asp:proxy_rank}, $\text{rank}(M_P) = n_z$.
\\
\\
Since $P(S \mid X = x)$ is a linear combination of the rows of $M_P$, the set of all such vectors $\{ P(S \mid X = x) \}_{x \in \text{supp}(X)}$ lies within the row space of $M_P$.
\\
\\
Given that $M_P$ has full rank, the dimension of this subspace is exactly $n_z$. Formally, the span of $\{ P(S \mid X = x) \}$ satisfies:
\[
\text{dim}\left( \text{span} \left\{ P(S \mid X = x) \mid x \in \text{supp}(X) \right\} \right) = n_z.
\]
Therefore, there is a unique \(n_z\) for a given observed distribution.
\\
\paragraph{2. Relating $P(Z \mid X = x)$ and $Q(Z \mid X = x)$ through a Linear Transformation}\textbf{ }\\
\\
Given that $P(X, S) = Q(X, S)$, it follows that for all $x \in \text{supp}(X)$:
\[
P(S \mid X = x) = Q(S \mid X = x).
\]
Using Assumption~\ref{asp:structural_constraints}, we can decompose these conditional distributions as:
\begin{align}
P(S \mid X = x) &= \sum_{z=1}^{n_z} P(S \mid Z = z) P(Z = z \mid X = x) = M_P^T P(Z \mid X = x), \label{eq:P_decomp} \\
Q(S \mid X = x) &= \sum_{z=1}^{n_z} Q(S \mid Z = z) Q(Z = z \mid X = x) = M_Q^T Q(Z \mid X = x), \label{eq:Q_decomp}
\end{align}
where we define the matrices $M_P \in \mathbb{R}^{n_z \times n_s}$ and $M_Q \in \mathbb{R}^{n_z \times n_s}$ such that:
\[
(M_P)_{z,s} = P(S = s \mid Z = z), \quad (M_Q)_{z,s} = Q(S = s \mid Z = z).
\]
Given that $P(S \mid X = x) = Q(S \mid X = x)$ for all $x$, we have:
\[
M_P^T P(Z \mid X = x) = M_Q^T Q(Z \mid X = x).
\]
Since Assumption~\ref{asp:proxy_rank} ensures that $M_P$ has full rank and that $n_s \geq n_z$, the matrix $M_P^\top$ admits a left Moore-Penrose pseudo-inverse, denoted by $M_P^{\dagger} \in \mathbb{R}^{n_z \times n_s}$, satisfying:
\[
M_P^{\dagger} M_P^\top = I_{n_z},
\]
where $I_{n_z}$ is the $n_z \times n_z$ identity matrix. Multiplying both sides of the equation by $M_P^{\dagger}$ yields:
\[
M_P^{\dagger} M_P^\top P(Z \mid X = x) = M_P^{\dagger} M_Q^\top Q(Z \mid X = x),
\]
which simplifies to:
\begin{equation}
P(Z \mid X = x) = A Q(Z \mid X = x), \label{eq:linear_relation}
\end{equation}
where we define the transformation matrix $A$ as:
\[
A = M_P^{\dagger} M_Q^\top.
\]

\textbf{Constraints:}
We show that \(A\) must satisfy certain constraints and cannot be an arbitrary matrix.

1)We claim that $\mathbf{1}^\top A = \mathbf{1}^\top$, where $\mathbf{1} \in \mathbb{R}^{n_z}$ is the vector of ones. In other words, the sum of elements in each column of $A$ is 1.

Proof: Given that $P(S \mid X = x) = Q(S \mid X = x)$ for all $x$, the columns of $M_P^\top$ and $M_Q^\top$ span the same subspace. Therefore, there exists a matrix $C \in \mathbb{R}^{n_z \times n_z}$ such that:
\[
M_P^\top C = M_Q^\top.
\]
This implies,
\[
C = M_P^{\dagger} M_Q^\top = A
\]
We know that both $M_P$ and $M_Q$ represent probability distributions \(P(S \mid Z)\) and \(Q(S \mid Z)\), meaning that:
\[
\mathbf{1}^\top M_P^\top = \mathbf{1}^\top, \quad \mathbf{1}^\top M_Q^\top = \mathbf{1}^\top,
\]
where $\mathbf{1} \in \mathbb{R}^{n_s}$ is the vector of ones. Applying these equalities to the equation $M_P^\top A = M_Q^\top$, we get:
\[
\mathbf{1}^\top M_P^\top A = \mathbf{1}^\top M_Q^\top.
\]
Substituting the known equalities:
\begin{equation}
\mathbf{1}^\top A = \mathbf{1}^\top, \label{equation:sumcolumn1}
\end{equation}

This means that the sum of the elements in each column of $A$ is 1.

2)We derive upper and lower bounds on the entries of \(A\).

Consider any row $r$ of $A$. By the definition of conditional probabilities and equation \eqref{eq:linear_relation}, we have:
\[
P(Z = r \mid X = x) = \sum_{i=1}^{n_z} A_{r i} Q(Z = i \mid X = x), \quad \forall x \in \text{supp}(X).
\]
Since $P(Z = r \mid X = x)$ is a valid probability, it satisfies:
\[
0 \leq \sum_{i=1}^{n_z} A_{r i} Q(Z = i \mid X = x) \leq 1.
\]
Subtracting $\frac{1}{2}$ from both sides, we obtain:
\[
-\frac{1}{2} \leq \sum_{i=1}^{n_z} \left( A_{r i} - \frac{1}{2} \right) Q(Z = i \mid X = x) \leq \frac{1}{2}.
\]
Let $\lambda = \max_{i} \left| A_{r i} - \frac{1}{2} \right|$. Without loss of generality, assume that the maximum is achieved at $i = 1$. Then:
\[
\left| \sum_{i=1}^{n_z} \left( A_{r i} - \frac{1}{2} \right) Q(Z = i \mid X = x) \right| \leq \frac{1}{2}.
\]
Applying the triangle inequality, we have:
\[
\left| \left( A_{r1} - \frac{1}{2} \right) Q(Z = 1 \mid X = x) \right| - \left| \sum_{i=2}^{n_z} \left( A_{r i} - \frac{1}{2} \right) Q(Z = i \mid X = x) \right| \leq \frac{1}{2}.
\]
Since $|A_{r i} - \frac{1}{2}| \leq \lambda$ for all $i$, it follows that:
\[
\lambda Q(Z = 1 \mid X = x) - \lambda \sum_{i=2}^{n_z} Q(Z = i \mid X = x) \leq \frac{1}{2}.
\]
Simplifying, we get:
\[
\lambda Q(Z = 1 \mid X = x) - \lambda (1 - Q(Z = 1 \mid X = x)) \leq \frac{1}{2},
\]
which further simplifies to:
\[
\lambda (2 Q(Z = 1 \mid X = x) - 1) \leq \frac{1}{2}.
\]
Given that there exists $x^* \in \text{supp}(X)$ such that $Q(Z = 1 \mid X = x^*) \geq \eta$, where $\eta \in \left( \frac{1}{2}, 1 \right]$, we substitute to obtain:
\[
\lambda (2 \eta - 1) \leq \frac{1}{2} \quad \Rightarrow \quad \lambda \leq \frac{1}{2 (2 \eta - 1)}.
\]
Therefore, each entry $A_{r i}$ satisfies:
\begin{equation}
-\frac{1 - \eta}{2 \eta - 1} \leq A_{r i} \leq 1 + \frac{1 - \eta}{2 \eta - 1}. \label{eq:A_bounds}
\end{equation}

3)We derive lower bounds on the maximum entry in each row of \(A\)

Let $\mu = \max_{i} A_{r i}$. From the definition of $P(Z = r \mid X = x)$, we have:
\[
P(Z = r \mid X = x) = \sum_{i=1}^{n_z} A_{r i} Q(Z = i \mid X = x) \leq \mu \sum_{i=1}^{n_z} Q(Z = i \mid X = x) = \mu.
\]
Since $\max_{x} P(Z = r \mid X = x) \geq \eta$, it follows that $\mu \geq \eta$.

\paragraph{3. \(A\) can be approximated by a permutation matrix}\textbf{ }\\
\\

We aim to show that there exists a permutation matrix $\Pi$ such that:
\[
|A_{i j} - \Pi_{i j}| < O\left( \frac{1 - \eta}{2 \eta - 1} \right), \quad \forall i, j.
\]
If \(n_z = 1\), A is trivially the identity matrix and is equal to a permutation matrix. Henceforth, we assume \(n_z \geq 2\).
Consider the following cases:

\textbf{Case 1: $\frac{1}{2} < \eta \leq \frac{n_z + 1}{n_z + 2}$}
For this case, consider any generic permutation matrix \(\Pi\).
From equation \eqref{eq:A_bounds}, we have:
\[
|A_{i j} - \Pi_{i j}| \leq 1 + \frac{1 - \eta}{2 \eta - 1} \leq \frac{n_z + 2}{n_z + 2} + \frac{1 - \eta}{2 \eta - 1} \leq (n_z + 2)(1 - \eta) + \frac{1 - \eta}{2 \eta - 1} \leq \frac{(n_z + 2)(1 - \eta) }{2\eta - 1} + \frac{1 - \eta}{2 \eta - 1} = O\left( \frac{1 - \eta}{2 \eta - 1} \right).
\]

\textbf{Case 2: $\frac{n_z + 1}{n_z + 2} < \eta \leq 1 $}

Starting with the given inequality $\eta > \frac{n_z + 1}{n_z + 2}$, we have
\begin{align*}
\eta (n_z + 2) &> n_z + 1 \\
n_z \eta - n_z &> 1 - 2 \eta \\
n_z (1 - \eta) &< 2 \eta - 1 \\
(2 \eta - 1) &> (n_z - 2)\frac{(1 - \eta)}{2\eta - 1}\\
2 \eta &> 1 + (n_z - 2)\frac{(1 - \eta)}{2\eta - 1}
\label{equation:bounds}
\end{align*}

The above inequality, equation \eqref{equation:sumcolumn1} and \eqref{eq:A_bounds} imply that no more than one element in any column can be greater than or equal to \(\eta\).

Since each row must have at least one element \(\geq \eta\) and \(A\) is a square matrix, it follows that each column has exactly one element \(\geq \eta\). Without loss of generality, consider that for column \(c\), the first element is \(\geq \eta\), i.e. \(A_{1c} \geq \eta\).

Then,
\[\sum_{i=2}^{n_z} A_{ic} = 1 - A_{1c}\]
Since all \(A_{ic} \geq -\frac{1-\eta}{2\eta - 1}\), we have:
\[
\max_{i \neq 1} A_{ic} \leq (1-\eta) + (n_z - 2)\frac{1-\eta}{2\eta - 1}
\]

Also,
\[|1 - A_{1c}| \leq \max \left\{ 1-\eta, \frac{1-\eta}{2\eta - 1} \right\}\]
and
\[|A_{ic}| \leq \max \left\{ (1-\eta) + (n_z-2)\frac{1-\eta}{2\eta-1}, \frac{1-\eta}{2\eta-1} \right\}\]

Define the permutation matrix \(\Pi\) such that:
\[
\Pi_{i j} =
\begin{cases}
1 & \text{if } A_{i j} \geq \eta, \\
0 & \text{otherwise}.
\end{cases}
\]
By construction, $\Pi$ satisfies:
\[
|\Pi_{i j} - A_{i j}| \leq \max\left\{ 1 - \eta, \frac{(n_z - 2)(1 - \eta)}{2 \eta - 1}, \frac{(1 - \eta)}{2 \eta - 1}\right\} = O\left( \frac{1 - \eta}{2 \eta - 1} \right).
\]

We now obtain the final bound on the difference of the posterior distributions.

From equation \eqref{eq:linear_relation}, we have $P(Z|X=x) = A Q(Z|X=x)$. Using this, we can derive:

\begin{align*}
|P(Z = i|X = x) - Q(Z=\pi(i)|X=x)| &= \left|\sum_{j=1}^{n_z} A_{ij}Q(Z=j|X=x) - Q(Z=\pi(i)|X=x)\right| \\
&= \left|\sum_{j=1}^{n_z} (A_{ij} - \Pi_{ij})Q(Z=j|X=x)\right| \\
&\leq \sum_{j=1}^{n_z} |A_{ij} - \Pi_{ij}| |Q(Z=j|X=x)| \\
&\leq \max_{j} |A_{ij} - \Pi_{ij}| \sum_{j=1}^{n_z} |Q(Z=j|X=x)| \\
&= \max_{j} |A_{ij} - \Pi_{ij}|
\end{align*}

where we used the triangle inequality and the fact that $\sum_{j=1}^{n_z} |Q(Z=j|X=x)| = 1$ since $Q(Z|X=x)$ is a probability distribution.

From our analysis in step 6, we know that:

\[
\max_{j} |A_{ij} - \Pi_{ij}| \leq O\left(\frac{1-\eta}{2\eta-1}\right)
\]

Therefore, we can conclude:

\[
\sup_{x \in \text{supp}(X)} \left| P(Z = i \mid X = x) - Q(Z = \pi(i) \mid X = x) \right| \leq O\left( \frac{1 - \eta}{2\eta - 1} \right),
\]

for a permutation \(\pi\) corresponding to matrix \(\Pi\)

which proves Theorem \ref{theorem:identifiability}.
\end{proof}

\subsection{Proof of Corollary 1}
\renewcommand{\thecorollary}{\ref{corollary:full_identifiability}}
\begin{corollary}[Full Identifiability]

If $\eta = 1$, then the latent variables are fully identifiable. That is, there exists a permutation $\pi$ such that, for all $x \in \text{supp}(X)$ and for each $i \in \{1, 2, \dots, n_z\}$,

\[
P(Z = i \mid X = x) = Q(Z = \pi(i) \mid X = x).
\]

\end{corollary}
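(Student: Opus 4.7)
The plan is to obtain this corollary as the exact (non-approximate) specialization of Theorem~\ref{theorem:identifiability} when $\eta=1$. First, I would import Steps 1 and 2 of the proof of Theorem~\ref{theorem:identifiability} verbatim: from $P(X,S)=Q(X,S)$ and Assumptions~\ref{asp:structural_constraints}--\ref{asp:proxy_rank}, one still obtains a column-stochastic matrix $A = M_P^{\dagger}M_Q^{\top}$ satisfying $P(Z\mid X=x) = A\,Q(Z\mid X=x)$ for all $x\in\mathrm{supp}(X)$, together with $\mathbf{1}^{\top}A=\mathbf{1}^{\top}$. These steps do not use $\eta$, so they carry over unchanged.

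Next, I would re-run the entrywise bound derivation from Step~3 with $\eta=1$ plugged in. The key inequality $\lambda(2\eta-1)\le \tfrac{1}{2}$ with $\eta=1$ gives $\lambda\le \tfrac{1}{2}$, so each entry satisfies $A_{ri}\in[0,1]$. Combined with Assumption~\ref{asp:weak_overlap} applied to $P$, for every row $r$ there exists some $x^{*}$ with $P(Z=r\mid X=x^{*})=1$, which, since $A_{ri}\in[0,1]$ and $Q(Z\mid X=x^{*})$ is a probability vector, forces exactly one entry of row $r$ to equal $1$ and the rest to equal $0$. The column-stochasticity $\mathbf{1}^{\top}A=\mathbf{1}^{\top}$ then forces these rows to have disjoint supports, so $A$ is exactly a permutation matrix $\Pi$ corresponding to some permutation $\pi$ on $\{1,\dots,n_z\}$.

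Finally, substituting $A=\Pi$ into $P(Z\mid X=x)=A\,Q(Z\mid X=x)$ yields
\[
P(Z=i\mid X=x) \;=\; Q(Z=\pi(i)\mid X=x), \qquad \forall\, x\in\mathrm{supp}(X),\ i\in\{1,\dots,n_z\},
\]
which is the claim. Equivalently, this is the $\eta\to 1$ limit of the bound $O\bigl(\tfrac{1-\eta}{2\eta-1}\bigr)$ in Theorem~\ref{theorem:identifiability} becoming zero.

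I do not anticipate a real obstacle: everything needed is already inside the proof of Theorem~\ref{theorem:identifiability}. The only subtle point worth spelling out carefully is the transition from ``$A$ has each row dominated by a single entry $\geq \eta=1$'' to ``$A$ is a genuine permutation matrix''; this uses both the entrywise bound $A_{ri}\in[0,1]$ (so the dominant entry must be exactly $1$ and the remainder of the row sums to $0$) and the column-sum constraint (so two rows cannot share the same dominant column). Once these are verified, the corollary follows immediately without any approximation argument.
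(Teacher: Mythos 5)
Your proof is correct in substance but takes a genuinely different route from the paper. The paper's own proof of this corollary is a two-line specialization: it substitutes $\eta = 1$ into the bound of Theorem~\ref{theorem:identifiability}, observes that $O\bigl(\tfrac{1-\eta}{2\eta-1}\bigr)$ evaluates to $0$ (which is legitimate because the implicit constants, visible in the theorem's proof as $\max\{1-\eta,\ (n_z-2)\tfrac{1-\eta}{2\eta-1},\ \tfrac{1-\eta}{2\eta-1}\}$, all vanish exactly at $\eta=1$), and concludes equality. You instead re-open the proof of the theorem and show directly that the transformation matrix $A$ is \emph{exactly} a permutation matrix when $\eta=1$. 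Your version is more self-contained and arguably more illuminating, since it does not rely on the reader trusting that the big-$O$ constant stays bounded as $\eta\to 1$; the paper's version is shorter and reuses the theorem as a black box. One local imprecision in your argument: from $1=P(Z=r\mid X=x^*)=\sum_i A_{ri}\,Q(Z=i\mid X=x^*)$ with $A_{ri}\in[0,1]$, the row-level conclusion is only that $A_{ri}=1$ for every $i$ in the support of $Q(Z\mid X=x^*)$ (possibly several entries), and it says nothing about the entries outside that support --- rows of $A$ are not constrained to sum to one, only columns are. The correct finish is the one you gesture at: each column sums to $1$ with nonnegative entries, so it can contain at most one entry equal to $1$; each of the $n_z$ rows contains at least one entry equal to $1$; by counting, each row contains exactly one and each column exactly one, and the column-sum constraint then zeroes out everything else, making $A$ a permutation matrix. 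With that ordering of the argument repaired, your proof goes through and yields the stated conclusion.
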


\renewcommand{\thecorollary}{\arabic{corollary}}
\begin{proof}
The proof follows directly from Theorem \ref{theorem:identifiability}. When $\eta = 1$, the bound in equation \eqref{eq:bound} becomes:

\begin{align*}
\sup_{x \in \text{supp}(X)} \left| P(Z = i \mid X = x) - Q(Z = \pi(i) \mid X = x) \right|
&\leq O\left( \frac{1 - \eta}{2\eta - 1} \right) \\
&= O\left( \frac{1 - 1}{2(1) - 1} \right) \\
&= O(0) \\
&= 0
\end{align*}
Therefore,
\[
P(Z = i \mid X = x) = Q(Z = \pi(i) \mid X = x)
\]

for all $x$ and $i$, proving that the latent variables are fully identifiable up to permutation when $\eta = 1$.
\end{proof}

\subsection{Proof of Proposition 1}
\label{appendix:proposition_proof}
This proposition and proof closely follows \citet{d2021overlap}.
\renewcommand{\theproposition}{\ref{prop:eta}}
\begin{proposition}
    Let \( Z \in [n_z] \) and \( X \in \mathbb{R}^{dim(X)} \). Further, assume that each feature \( X_{i} \) is sufficiently discriminative of the latent values of \( Z \), conditioned on all previous features \( X_{1:i-1} \). Specifically, for all feature indices \( i \in \{1,...,dim(X)\} \) and all latent values \( j \in [n_z] \), there exists \(\epsilon > 0\) such that, for all \( k \in [n_z] \) where \( k \neq j \),
    \begin{small}
    \(\mathbb{E}_{P(X|Z=j)}\text{KL}\left(P(X_i|X_{1:i-1},Z=j) \parallel P(X_i|X_{1:i-1}, Z=k)\right)
    \)
    \end{small} is greater than \(\epsilon\).
    Then, as \( dim(X) \rightarrow \infty \), there exists no \( \eta \in (0,1) \) such that
    \[    \max_{x} P(Z = j \mid X = x) \leq \eta \quad \forall j\]
\end{proposition}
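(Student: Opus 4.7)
The plan is to exploit Bayes' rule together with the chain-rule decomposition of the log-likelihood ratio, and then use the per-feature KL lower bound $\epsilon$ to drive the posterior towards a point mass as $\dim(X) \to \infty$. I would argue by contradiction: suppose that some $\eta \in (0,1)$ exists with $\max_{x} P(Z = j \mid X = x) \leq \eta$ for every $j$.

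Fix any $j$ and any competitor $k \neq j$. The chain rule gives
\[
\log \frac{P(X \mid Z=j)}{P(X \mid Z=k)} \;=\; \sum_{i=1}^{\dim(X)} \log \frac{P(X_i \mid X_{1:i-1}, Z=j)}{P(X_i \mid X_{1:i-1}, Z=k)}.
\]
Taking expectation under $P(X \mid Z=j)$ and applying the tower property term by term, each summand equals an expected conditional KL divergence, which by the informativeness hypothesis is at least $\epsilon$. Summing,
\[
\mathrm{KL}\bigl(P(X \mid Z=j) \,\big\Vert\, P(X \mid Z=k)\bigr) \;\geq\; \epsilon \cdot \dim(X) \;\longrightarrow\; \infty.
\]

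Now I translate the contradiction hypothesis through Bayes' rule. The assumption $P(Z=j \mid X=x) \leq \eta$ uniformly in $x$ and $j$ implies that for each $x$ in the support there exists some $k = k(x) \neq j$ with $P(Z = k \mid X=x) \geq (1-\eta)/(n_z - 1)$ by pigeonhole, so the log-ratio $\log[P(X=x \mid Z=j) / P(X=x \mid Z=k(x))]$ is uniformly bounded above by a constant $L = L(\eta, n_z, P(Z))$; equivalently, $\min_{k \neq j} \log r_{jk}(x) \leq L$ for every $x$. Under $P(X \mid Z=j)$, however, the partial-sum process of per-feature log-ratios is a submartingale with expected increments bounded below by $\epsilon$, so for each fixed $k \neq j$ a martingale LLN / concentration argument yields $\log r_{jk}(X) \to \infty$ in probability. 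A union bound over the finite competitor set $\{k : k \neq j\}$ then gives $\min_{k \neq j} \log r_{jk}(X) \to \infty$ in probability, contradicting the uniform bound $L$ and forcing $\max_x P(Z=j \mid X=x) \to 1$ for some $j$.

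The main obstacle is the final step, namely promoting the per-$k$ divergence of $\mathbb{E}[\log r_{jk}]$ into divergence of the pointwise minimum over $k$ at some realization $x$. The stated expectation bound alone does not deliver pointwise divergence of $\log r_{jk}(X)$, so I would either invoke a mild regularity such as uniform integrability (or a sub-Gaussian tail) on the per-feature log-ratios to apply a martingale LLN, or strengthen the hypothesis to a conditional lower bound $\mathrm{KL}(P(X_i \mid X_{1:i-1}, Z=j) \,\Vert\, P(X_i \mid X_{1:i-1}, Z=k)) \geq \epsilon$ almost surely under $P(X \mid Z=j)$, so that the submartingale has a.s.\ positive drift and its partial sums diverge pathwise. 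Either route then allows the union bound over the finite alternative set to close the argument.
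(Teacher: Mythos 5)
Your opening is exactly the paper's: the chain rule for KL divergence plus the per-feature hypothesis gives $\mathrm{KL}\bigl(P(X\mid Z=j)\,\Vert\,P(X\mid Z=k)\bigr) \geq \epsilon\cdot\dim(X) \to \infty$. Where you diverge --- and where the gap you yourself flag appears --- is in how the contradiction hypothesis is used. You try to show that the \emph{pointwise} log-ratio $\log r_{jk}(X)$ diverges in probability and then clash this with a pointwise upper bound on $\min_{k\neq j}\log r_{jk}(x)$; as you note, the expectation bound $\mathbb{E}[\log r_{jk}] \geq \epsilon\cdot\dim(X)$ does not by itself give divergence along realizations, so you are forced to import uniform integrability, sub-Gaussian tails, or an almost-sure strengthening of the hypothesis. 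None of that is needed. The paper's proof (which, note, is carried out only for $n_z=2$) closes the loop entirely at the level of expectations: if $P(Z=0\mid X=x)\leq\eta$ and $P(Z=1\mid X=x)\leq\eta$ for all $x$, then since the two posteriors sum to one, both lie in $[1-\eta,\eta]$, so Bayes' rule gives the \emph{uniform pointwise} upper bound $\log\frac{P(X=x\mid Z=0)}{P(X=x\mid Z=1)} \leq \log\frac{\eta}{1-\eta} - \log\gamma$ with $\gamma = P(Z=1)/P(Z=0)$. Taking the expectation under $P(X\mid Z=0)$ bounds the KL divergence above by this same constant, which contradicts the $\epsilon\cdot\dim(X)$ lower bound once $\dim(X)$ is large. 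No martingale LLN, no union bound, and no pointwise control of the log-ratio are required --- only the observation that a KL divergence is the expectation of a quantity you have already bounded uniformly from above.

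A further remark: the extra machinery you reach for is partly an artifact of attempting the general-$n_z$ case, where the pigeonhole competitor $k(x)$ varies with $x$ and the two-sided posterior bound fails (for $\eta$ close to $1$ one cannot conclude $P(Z=j\mid X=x)\geq 1-(n_z-1)\eta>0$). The paper sidesteps this by explicitly restricting to $Z\in\{0,1\}$ ``for simplicity,'' where the single fixed pair $(j,k)=(0,1)$ suffices and both posteriors are automatically bounded away from zero. If you want the general case, your concerns are legitimate and additional argument is genuinely needed; but as a proof of what the paper actually proves, the clean route is the expectation-level contradiction above, and your proposal as written does not close without the extra assumptions you list.
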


\renewcommand{\theproposition}{\arabic{proposition}}

\begin{proof}
For simplicity, we assume $Z \in \{0,1\}$. We proceed by contradiction.

Suppose there exists an $\eta \in (0,1)$ such that for all $x$:
\[P(Z = 0 \mid X = x) \leq \eta \quad \text{and} \quad P(Z = 1 \mid X = x) \leq \eta\]

This implies that for all x:
\[1 - \eta \leq P(Z = 0 \mid X = x) \leq \eta \quad \text{and} \quad 1 - \eta \leq P(Z = 1 \mid X = x) \leq \eta\]
It follows that $P(Z = 0) > 0$ and $P(Z = 1) > 0$. Let $\gamma = P(Z = 1) / P(Z = 0)$.
For any $x$:
\[\frac{P(X = x \mid Z = 0)}{P(X = x \mid Z = 1)} = \frac{P(Z = 0 \mid X = x)}{P(Z = 1 \mid X = x)} \cdot \frac{1}{\gamma} \leq \frac{\eta}{1-\eta} \cdot \frac{1}{\gamma}\]
Taking logarithms:
\[\log \frac{P(X = x \mid Z = 0)}{P(X = x \mid Z = 1)} \leq \log \frac{\eta}{1-\eta} - \log \gamma\]
Now, consider the KL divergence between $P(X \mid Z = 0)$ and $P(X \mid Z = 1)$:
\begin{align*}
&\mathrm{KL}(P(X \mid Z = 0) \parallel P(X \mid Z = 1)) \\
&= \mathbb{E}_{X \sim P(X \mid Z = 0)}\left[\log \frac{P(X \mid Z = 0)}{P(X \mid Z = 1)}\right] \\
&\leq \log \frac{\eta}{1-\eta} - \log \gamma
\end{align*}
This implies that the KL divergence is bounded.
However, by the chain rule of KL divergence and the condition given in the proposition:
\begin{align*}
&\mathrm{KL}(P(X \mid Z = 0) \parallel P(X \mid Z = 1)) \\
&= \sum_{i=1}^{\dim(X)} \mathbb{E}_{P(X \mid Z = 0)}\left[\mathrm{KL}(P(X_i \mid X_{1:i-1}, Z = 0) \parallel P(X_i \mid X_{1:i-1}, Z = 1))\right] \\
&\geq \dim(X) \cdot \epsilon
\end{align*}
As $\dim(X) \to \infty$, the right-hand side approaches infinity, contradicting the boundedness we established earlier.
Therefore, our initial assumption must be false. We conclude that as $\dim(X) \to \infty$, there exists no $\eta \in (0,1)$ such that $\max_x P(Z = z \mid X = x) \leq \eta$ for all $z \in \{0,1\}$ .
\end{proof}

\subsection{Proof of Theorem 2}
\label{appendix:regularization_proof}
\renewcommand{\thetheorem}{\ref{theorem:regularization}}
\begin{theorem}
Consider the set \( \mathcal{M} \), which denotes the set of all matrices \( M \) associated with some distribution \( P(S \mid Z) \) for which there exists a \(\phi(\cdot)\) associated with density \( \Ptr{Z \mid X} \) such that \( M \) and \(\phi(\cdot)\) are minimizers of the reconstruction loss in \Cref{eq:recloss}. Let \( M^* \in \mathcal{M} \) be the matrix associated with a \( Z \) that satisfies Assumption~\ref{asp:weak_overlap} for \(\eta = 1\). Then it holds that
\[
 \mathcal{L}_{\text{var}}(M^{*}) \leq \mathcal{L}_{\text{var}}(M), \quad \forall M \in \mathcal{M},
\]
where
\begin{equation}
\mathcal{L}_{\text{var}}(M) = \max_{z \in [n_z]} \left( \frac{1}{n_s} \sum_{s=1}^{n_s} \left(M_{zs} - \frac{1}{n_s}\right)^2 \right)
\end{equation}
\end{theorem}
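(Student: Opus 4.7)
}

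The plan is to exploit the fact that any two minimizers in $\mathcal{M}$ must recover the same observed conditional distribution $P^{\mathrm{tr}}(S \mid X)$, combined with the special structure that Assumption~\ref{asp:weak_overlap} (with $\eta = 1$) imposes on $M^{*}$. The key observation is that $\eta = 1$ means: for every latent class $i \in [n_z]$, there exists an $x_i \in \mathrm{supp}(X)$ such that $P(Z = i \mid X = x_i) = 1$, and therefore the $i$-th row of $M^{*}$ literally coincides with the observed vector $P^{\mathrm{tr}}(S \mid X = x_i)$. For any competing minimizer $M \in \mathcal{M}$ with encoder $\phi$, we have $P^{\mathrm{tr}}(S \mid X = x_i) = \sum_{z' = 1}^{n_z} \phi_{z'}(x_i)\, M_{z',:}$ with $\phi(x_i)$ a valid probability vector on $[n_z]$. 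Hence every row of $M^{*}$ is a convex combination of the rows of $M$.

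Next, I would rewrite the regularizer in a form that makes its convexity transparent. Because each row of $M$ lies in the simplex, $\sum_s M_{zs} = 1$, so
\begin{equation}
\frac{1}{n_s}\sum_{s=1}^{n_s}\!\left(M_{zs} - \tfrac{1}{n_s}\right)^{2} \;=\; \frac{1}{n_s}\|M_{z,:}\|_2^{2} \;-\; \frac{1}{n_s^{2}}.
\end{equation}
Thus $\mathcal{L}_{\mathrm{var}}(M) = \max_{z} g(M_{z,:})$ where $g(v) := \tfrac{1}{n_s}\|v\|_2^{2} - \tfrac{1}{n_s^{2}}$ is convex in $v$.

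Putting the two pieces together, for each $i$ write $M^{*}_{i,:} = \sum_{z'} \phi_{z'}(x_i)\, M_{z',:}$ and apply Jensen's inequality to the convex function $g$:
\begin{equation}
g(M^{*}_{i,:}) \;\leq\; \sum_{z'} \phi_{z'}(x_i)\, g(M_{z',:}) \;\leq\; \max_{z' \in [n_z]} g(M_{z',:}) \;=\; \mathcal{L}_{\mathrm{var}}(M).
\end{equation}
Taking the maximum of the left-hand side over $i \in [n_z]$ yields $\mathcal{L}_{\mathrm{var}}(M^{*}) \leq \mathcal{L}_{\mathrm{var}}(M)$ for every $M \in \mathcal{M}$, as desired.

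The main obstacle I expect to encounter is justifying rigorously the first step, namely that membership in $\mathcal{M}$ forces $M^{\top}\phi(X) = P^{\mathrm{tr}}(S \mid X)$ pointwise (a.s.\ on $\mathrm{supp}(X)$), so that the representation $M^{*}_{i,:} = \sum_{z'} \phi_{z'}(x_i)\, M_{z',:}$ is valid at the particular points $x_i$ where $P(Z=i \mid X=x_i)=1$. This requires the reconstruction loss $\mathcal{L}_S$ (e.g.\ cross-entropy or KL) to be a strictly proper scoring rule so that its unique pointwise minimizer is the true conditional, and enough regularity/capacity of the encoder $\phi$ so that this minimum is attainable. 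Once that identification is granted, the rest reduces to the convexity argument above, which is essentially a one-line application of Jensen's inequality.
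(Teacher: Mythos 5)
Your proposal is correct and follows essentially the same route as the paper's proof: the paper likewise uses $\eta=1$ to identify each row of $M^{*}$ with an observed conditional $P^{\text{tr}}(S\mid X=x_i)$, expresses that row as a convex combination of the rows of any competing $M$, and bounds the row variance via a convexity/Jensen-type inequality (stated there as a lemma on variances of convex combinations of probability vectors, proved with the AM--RMS inequality) before taking the maximum over rows. The concern you raise about rigorously justifying that all minimizers reproduce $P^{\text{tr}}(S\mid X)$ pointwise is also left implicit in the paper's argument.
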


\renewcommand{\thetheorem}{\arabic{theorem}}

\begin{proof}
    To prove this, we first prove the following lemma.
\begin{lemma}
Let \( \mathbf{v}, \mathbf{w}_1, \mathbf{w}_2, \ldots, \mathbf{w}_n \in [0,1]^k \) be vectors representing probability distributions, i.e., \( \sum_{j=1}^k w_{ij} = 1 \) for all \(1 \leq i \leq n\) and \( \sum_{j=1}^k v_{j} = 1 \).
Suppose \( \mathbf{v} = \sum_{i=1}^n \alpha_i \mathbf{w}_i \) where \(\sum_{i=1}^n \alpha_i = 1\) and \( \alpha_i \geq 0 \) for all \( i \). That is, \(\mathbf{v}\) is a convex combination of the vectors \( \mathbf{w}_i \). Let \(\text{Var}(\mathbf{v})\) denote the variance of this categorical distribution. Under these conditions, the variance of \( \mathbf{v} \) satisfies:

\[
\text{Var}(\mathbf{v}) \leq \sum_{i=1}^n \alpha_i \text{Var}(\mathbf{w}_i) \leq \max_{i} \text{Var}(\mathbf{w}_i).
\]
\end{lemma}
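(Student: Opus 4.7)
My plan is to recognize that the quantity called $\text{Var}(\cdot)$ in the lemma statement---consistent with the definition of $\mathcal{L}_{\text{var}}$ from \Cref{eq:Lvar}---is a convex function on the probability simplex, so the first inequality is Jensen's inequality and the second is the trivial bound of a convex combination by its maximum.

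First I would unpack the definition: for a probability vector $\mathbf{v}\in[0,1]^k$ with $\sum_j v_j = 1$, the mean of its entries is $1/k$, so writing $\mathbf{u} = \frac{1}{k}\mathbf{1}$ gives $\text{Var}(\mathbf{v}) = \frac{1}{k}\sum_{j=1}^k (v_j - 1/k)^2 = \frac{1}{k}\|\mathbf{v} - \mathbf{u}\|_2^2$. This is manifestly convex in $\mathbf{v}$, being the composition of the convex squared Euclidean norm with the affine shift $\mathbf{v}\mapsto \mathbf{v}-\mathbf{u}$.

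Next I would substitute $\mathbf{v} = \sum_{i=1}^n \alpha_i \mathbf{w}_i$. Because $\sum_i \alpha_i = 1$, we also have $\mathbf{u} = \sum_i \alpha_i \mathbf{u}$, so $\mathbf{v}-\mathbf{u} = \sum_i \alpha_i (\mathbf{w}_i - \mathbf{u})$. Applying Jensen's inequality to the convex function $\|\cdot\|_2^2$ yields $\|\mathbf{v}-\mathbf{u}\|_2^2 \leq \sum_i \alpha_i \|\mathbf{w}_i - \mathbf{u}\|_2^2$, and dividing by $k$ produces exactly the first claimed inequality $\text{Var}(\mathbf{v}) \leq \sum_i \alpha_i \text{Var}(\mathbf{w}_i)$.

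Finally, the second inequality $\sum_i \alpha_i \text{Var}(\mathbf{w}_i) \leq \max_i \text{Var}(\mathbf{w}_i)$ is immediate from $\alpha_i \geq 0$ and $\sum_i \alpha_i = 1$. I do not foresee any real obstacle; the only subtlety is pinning down that $\text{Var}$ refers to the mean-squared-deviation-from-$1/k$ functional used in \Cref{eq:Lvar}, which the hypotheses $\sum_j v_j = \sum_j w_{ij} = 1$ guarantee by fixing the entry mean to $1/k$ for every vector involved.
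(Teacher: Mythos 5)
Your proof is correct and is essentially the paper's argument: the paper's key step, labeled the ``AM-RMS inequality,'' is exactly Jensen's inequality for the square function applied coordinate-wise, which summed over coordinates is your Jensen step for the convex function $\|\cdot\|_2^2$; the second inequality is handled identically in both. Your observation that all entry means equal $1/k$ (so the functional matches $\mathcal{L}_{\text{var}}$) is also made explicitly in the paper, so there is no substantive difference to report.
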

\begin{proof}

Recall the definition of variance for a vector \( \mathbf{v} \in \mathbb{R}^k \) is given by:

\[
\text{Var}(\mathbf{v}) = \frac{1}{k} \sum_{j=1}^k (v_j - \bar{v})^2,
\]

where \( v_j \) is the \( j \)-th component of \( \mathbf{v} \) and \( \bar{v} \) is the mean of the components of \( \mathbf{v} \). Given \( \mathbf{v} = \sum_{i=1}^n \alpha_i \mathbf{w}_i \), each component \( v_j \) of \( \mathbf{v} \) can be expressed as:

\[
v_j = \sum_{i=1}^n \alpha_i w_{ij}.
\]

The mean \( \bar{v} \) of the components of \( \mathbf{v} \) can be calculated as:

\[
\bar{v} = \frac{1}{k} \sum_{j=1}^k v_j = \frac{1}{k} \sum_{j=1}^k \sum_{i=1}^n \alpha_i w_{ij}.
\]

Since each \( \mathbf{w}_i \) sums to 1, it follows that:

\[
\bar{v} = \sum_{i=1}^n \alpha_i \left( \frac{1}{k} \sum_{j=1}^k w_{ij} \right) = \sum_{i=1}^n \alpha_i \bar{w_i}.
\]

Now, substituting back into the variance formula, we get:

\[
\text{Var}(\mathbf{v}) = \frac{1}{k} \sum_{j=1}^k \left(\sum_{i=1}^n \alpha_i (w_{ij} - \bar{w_i})\right)^2.
\]

Applying the AM-RMS inequality:

\[
\left(\sum_{i=1}^n \alpha_i (w_{ij} - \bar{w_i})\right)^2 \leq \left(\sum_{i=1}^n \alpha_i (w_{ij} - \bar{w_i})^2\right),
\]
and since $\alpha_i^2 \leq \alpha_i \in [0,1]$,
we can simplify the variance expression:

\[
\text{Var}(\mathbf{v}) \leq \sum_{i=1}^n \alpha_i \frac{1}{k} \sum_{j=1}^k (w_{ij} - \bar{w_i})^2.
\]

Finally, as \( \frac{1}{k} \sum_{j=1}^k (w_{ij} - \bar{w_i})^2 \) is the expression for the variance of \( \mathbf{w}_i \), we conclude:

\[
\text{Var}(\mathbf{v}) \leq \sum_{i=1}^n \alpha_i \text{Var}(\mathbf{w}_i) \leq \max_{i} \text{Var}(\mathbf{w}_i).
\]
\end{proof}

Now, we discuss the proof of Theorem~\ref{theorem:regularization}.

Since \((\phi^*(.), M^*)\) satisfies the weak overlap Assumption~\ref{asp:weak_overlap} for \(\eta = 1\), we know that \(\exists x^{*}\) such that \(\phi^{*}_i(x^{*}) = 1 \) for all \( i, 1 \leq i \leq n_z \) (because \(\phi(x) = P(Z \mid X=x)\)). Therefore, we know that
\[
\forall i, \exists x^{*} \text{ such that } P(S \mid X=x^{*}) = \sum_z \phi^{*}_z(x^{*}) M^{*}_{z,:} = M^{*}_{i,:} \,.
\]
Moreover, for any valid $\phi_z(x^{*})$,
\[
P(S \mid X=x^{*}) = \sum_z \phi_z(x^{*}) M_{z,:}\, .
\]
Therefore, for all i,
\[
M^{*}_{i,:} = \sum_z \phi_z(x^{*}) M_{z,:}\, .
\]
Hence, using Lemma 1, for all $i$ we have
\[
\text{Var}(M^{*}_{i,:}) \leq \max_z \text{Var}(M_{z,:}) \, .
\]
Hence,
\[
\max_z \text{Var}(M^{*}_{z,:}) \leq \max_z \text{Var}(M_{z,:}).
\]
Since $M^{*}_{z,:}$ and $M_{z,:} \in \mathbb{R}^{n_s}$, the mean of these is $\frac{1}{n_s}$.
Hence, \[
\max_z \text{Var}(M_{z,:}) =
\mathcal{L}_{\text{var}}(M) = \max_{z \in \{1, \dots, n_z\}} \left( \frac{1}{n_s} \sum_{s=1}^{n_s} (M_{s,z} - \frac{1}{n_s})^2 \right),
\]
\[
\max_z \text{Var}(M^{*}_{z,:}) =
\mathcal{L}_{\text{var}}(M^{*}) = \max_{z \in \{1, \dots, n_z\}} \left( \frac{1}{n_s} \sum_{s=1}^{n_s} (M^{*}_{s,z} - \frac{1}{n_s})^2 \right).
\]
Therefore,
\[
\mathcal{L}_{\text{var}}(M^{*}) \leq \mathcal{L}_{\text{var}}(M).
\]
\end{proof}

\section{DETAILED DISCUSSION ON ASSUMPTIONS}
\label{appendix:assumption_discussion}
In this section, we comprehensively discuss our assumptions and compare them to those in our closest related work, Proxy Domain Adaptation (ProxyDA)~\citep{ProxyDA}. We then examine the practicality of these assumptions. Note that ProxyDA operates under two settings: one with multiple sources and a proxy variable, and the other with a concept variable and a proxy variable. Since our model is designed to work with either multiple sources or a proxy variable, we will only compare it with the former setting.

\begin{table}[h!]
\centering
\resizebox{\columnwidth}{!}{
\begin{tabular}{@{}lll@{}}
\toprule
\textbf{Assumptions} & \textbf{Ours} & \textbf{Proxy Domain Adaptation} \\ \midrule
Task & Out-of-distribution & Domain Adaptation \\
Observed variables & $X$, $Y$ and ($U$ or $S$) & $X$, $Y$, $U$ and $S$ \\
Number of labeled sources & 1 & multiple ($\geq n_z$) \\
Conditional independence & $S \perp X,Y \mid Z$ (or $U \perp X,Y \mid Z$) & $U \perp X,Y,S \mid Z$ and $S \perp X \mid Z$ \\
Causality between $X$ and $Y$ & $X \leftrightarrow Y$ & $X \rightarrow Y$\\
Full rank & $P(S \mid Z)$ (or $P(U \mid Z)$) must be full rank & $P(S \mid Z)$ must be full rank and $P(Z \mid U,X)$ must be full rank for X \\
Discrete unobserved confounder & Yes & Yes \\
Informative variable & No & Yes \\
Existence of high-likelihood sample & Yes & No \\
Proxy available at test time & No & Yes \\
$ns \geq n_z$ & Yes & Yes \\
$n_z$ known & No & No \\

\bottomrule
\end{tabular} }
\caption{Comparison of Assumptions between Ours and \citet{ProxyDA}}
\label{tab:assumptions_comparison}
\end{table}

\paragraph{Our assumptions}
(i) In real-world tasks, test data is rarely available during training and train data is rarely available during inference. Hence, the OOD task we consider is more realistic and challenging. (ii) We only require \( U \) (multiple training sources) or \( S \), a proxy for the confounder, not both simultaneously. Additionally, we do not require a proxy at test time, relaxing previous assumptions. Moreover, for the multi-source task, only one of the sources needs to be labeled. (iii) Unlike ProxyDA, we allow for a bi-directional arrow between \(X\) and \(Y\) in our causal diagram. The causal relationship between \( X \) and \( Y \) can vary widely based on the problem. For example, in a house price prediction problem, it is likely that \( X \rightarrow Y \) since features of the house likely causally affect the price of the house. However, in image classification problems, it is common to have \( Y \rightarrow X \), where the category of the object causes the distribution of pixels in the image. Furthermore, in many cases, some features in \( X \) might cause \( Y \) and others may be effects of \( Y \). For example, in our income prediction task, it is likely that education causes income but health insurance coverage is caused by income. Both are features in \( X \). Therefore, allowing a bidirectional causal relationship between \( X \) and \( Y \) increases the scope of our method. (iv) We are slightly more restrictive regarding conditional independence. While ProxyDA allows \( S \) to cause \( Y \), we do not allow that relation. If \(S\) causes \(Y\), we cannot allow \(Y\) to cause \(X\). This is because if \( S \) causes \( Y \) and \( Y \) causes \( X \), then \( S \perp X \mid Z \) does not hold. Therefore, there is an inherent trade-off between (iii) and (iv).

(v)As far as we know, all latent shift methods \citep{Alabdulmohsin2023, ProxyDA} make full rank assumptions. Full rank assumptions ensure diversity in distributions with respect to the confounder and are necessary to prove identifiability. However, we do not require any full rank condition on \( P(Z \mid S,X) \) or \( P(Z \mid X) \) for all \( X \), which could be quite strict. For example, the full rank assumption on \( P(Z \mid S,X) \) for all \( X \) does not allow \( Z \) to be deterministic for any \( X \).

(vi)While the informative variable assumption is common in the literature \citep{Miao, ProxyDA}, we do not require this assumption. This assumption does not allow certain distributions, for example, those where \( P(Z \mid X) \) can take only a finite set of values.

One of the key assumptions we make, which other papers do not, is the weak overlap assumption. Our approximate identifiability result in Theorem~\ref{theorem:identifiability} shows that we can bound the difference between the posteriors of any two possible distributions by \(O(\frac{1 - \eta}{2\eta - 1})\). Proposition~\ref{prop:eta} shows that Assumption~\ref{asp:weak_overlap} would hold for an \(\eta\) arbitrarily close to 1 for high-dimensional data. Therefore, given the observed distribution \(P(X,S)\), we can approximate the latent distribution by an arbitrarily small error with the increase in dimension of \(X\).

\section{EXPERIMENTAL DETAILS} \label{app:exp}
In this section, we provide additional details on our dataset construction, baselines, model implementation, and model selection.
\subsection{Synthetic Data}
\label{appendix:synthetic_data}

\paragraph{Proxy OOD Task}
For the proxy OOD task, we define the latent confounder $Z \in \{0, 1, 2\}$ and the proxy $S \in \{0, 1, 2\}$ with $X \in \mathbb{R}^3$ as the observed feature vector. The conditional distribution of $S$ given $Z$, denoted as $P(S \mid Z)$, is modeled with different probabilities depending on the value of $Z$. Specifically, the conditional probabilities for each value of $Z$ are given by:
\[
P(S = 0 \mid Z = 0) = 0.7, \quad P(S = 1 \mid Z = 0) = 0.3,
\]
\[
P(S = 0 \mid Z = 1) = 0.8, \quad P(S = 1 \mid Z = 1) = 0.1, \quad P(S = 2 \mid Z = 1) = 0.1,
\]
\[
P(S = 0 \mid Z = 2) = 0.1, \quad P(S = 2 \mid Z = 2) = 0.9.
\]

The feature vector $X = (X_1, X_2, X_3)$ is generated conditionally on $Z$ as follows:
\[
X_1 = -Z_x + Z + \epsilon_1, \quad X_2 = 2Z_x + \epsilon_2, \quad X_3 = 3Z_x + \epsilon_3,
\]
where $Z_x \sim \mathcal{N}(0, 1)$ is a Gaussian random variable, and $\epsilon_1 \sim \mathcal{N}(0, 0.2)$, $\epsilon_2 \sim \mathcal{N}(0, 1)$, $\epsilon_3 \sim \mathcal{N}(0, 1)$ are noise terms.

Finally, the binary target variable $Y$ is generated as a Bernoulli random variable based on a logistic function of $X$ and $Z$. The conditional probability of $Y$ given $X$ and $Z$ is:
\[
P(Y = 1 \mid X, Z) = \sigma(f(X, Z)),
\]
where $\sigma(\cdot)$ is the sigmoid function and the argument $\text{prob}(X, Z)$ is:
\[
f(X, Z) =
\begin{cases}
-0.6X_1 + 2.7X_2 + 0.6X_3 - 0.6Z + 1.5 & \text{if } Z = 0, \\
1.5X_1 + (1.2 - 6Z)X_2 + 2.1X_3 - 0.6Z + 1.5 & \text{if } Z \neq 0.
\end{cases}
\]

For the training set, the latent variable $Z$ is sampled from a multinomial distribution with probabilities $P(Z) = [0.15, 0.35, 0.5]$. For the test set, the probabilities are shifted to $P(Z) = [0.7, 0.2, 0.1]$ to simulate distribution shift.

\paragraph{Multi-source OOD Task}
For the multi-source OOD task, we also define $Z \in \{0, 1, 2\}$ as the latent confounder and $X \in \mathbb{R}^3$ as the feature vector. The conditional distribution $P(X \mid Z)$ follows the same structure as in the proxy OOD task.

The difference lies in how the training and test distributions for $Z$ are defined. During training, we use multiple source domains, each with different distributions of $Z$. For example, we sample from three source domains with distributions:
\[
P(Z) = [0, 0.7, 0.3], \quad P(Z) = [0.8, 0.1, 0.1], \quad P(Z) = [0.1, 0.0, 0.9].
\]
In the test domain, the distribution shifts to:
\[
P(Z) = [0.7, 0.2, 0.1].
\]

The target variable $Y$ is generated similarly to the proxy OOD task, with $P(Y \mid X, Z)$ following a logistic function based on $X$ and $Z$.

\subsection{Real Datasets}
\label{appendix:real_data}
We construct our datasets using the \texttt{folktables} Python package~\citep{ding2021retiring}. Below, we provide detailed information about each dataset and the rationale behind our splits.

\paragraph{ACSEmployment}
The ACSEmployment dataset is used to predict whether an individual is employed. Disability status is the latent confounder ($Z$), where $Z=1$ denotes individuals with disabilities, and $Z=0$ denotes individuals without disabilities. This setup models real-world conditions where disabled individuals might face different employment opportunities and challenges. Public insurance coverage and independent living status serve as proxies for disability status. In the training set, $\Ptr{Z=0}=0.95$ and $\Ptr{Z=1}=0.05$, while in the test set, $\Pte{Z=0}=0.05$ and $\Pte{Z=1}=0.95$. Both the training and test sets contain 170,000 samples each. The input has 54 features.

\paragraph{ACSPublicCoverage}
The ACSPublicCoverage dataset is used to predict whether an individual is covered by public health insurance. Age is the latent confounder ($Z$), where $Z=1$ denotes individuals with age above the mean, and $Z=0$ denotes individuals with age below the mean. This setup models real-world conditions where age could affect coverage of health insurance. We generate a binary synthetic proxy \(S\) for \(Z\). \(P(S=i \mid Z = i) = 0.8\) for \(i = 0,1\). In the training set, $\Ptr{Z=0}=0.95$ and $\Ptr{Z=1}=0.05$, while in the test set, $\Pte{Z=0}=0.05$ and $\Pte{Z=1}=0.95$. Both the training and test sets contain 4205 samples each. The input has 41 features.

\paragraph{ACSTravelTime}
The ACSTravelTime dataset is used to predict whether an individual has a commute to work that is longer than 20 minutes. State is the latent confounder ($Z$), where $Z=1$ denotes NY, and $Z=0$ denotes CA. This setup models real-world conditions where state could affect travel time. We generate a binary synthetic proxy \(S\) for \(Z\). \(P(S=i \mid Z = i) = 0.95\) for \(i = 0,1\). In the training set, $\Ptr{Z=0}=0.95$ and $\Ptr{Z=1}=0.05$, while in the test set, $\Pte{Z=0}=0.05$ and $\Pte{Z=1}=0.95$. Both the training and test sets contain 91200 samples each. The input has 41 features.

\paragraph{ACSIncome}
The ACSIncome dataset is used to predict an individual's income. The state of residence ($Z$) is the latent confounder. We created three synthetic splits for training:
\begin{itemize}
    \item 20\% Puerto Rico (PR) and 80\% California (CA)
    \item 20\% South Dakota (SD) and 80\% California (CA)
    \item 20\% PR, 20\% SD, and 60\% CA
\end{itemize}
Testing is performed on a split of 90\% PR, 5\% SD, and 5\% CA. The training set comprises 10,502 samples, while the test set contains 3,481 samples. The input has 76 features.

To create significant shifts in \( P(Y \mid X) \) across subpopulations, we first trained a classifier on data from California and tested it on data from all other states. Additionally, we trained a classifier for each state and tested it within the same state. We selected Puerto Rico and South Dakota as they exhibited the largest accuracy disparities compared to California. This method simulates real-world scenarios where economic conditions and job markets vary significantly across states, thereby affecting income levels.

\paragraph{ACSMobility}
The ACSMobility dataset is used to predict whether an individual had the same residential address one year ago. The state of residence ($Z$) is the latent confounder. The rest of the setup is the same as ACSIncome.

\subsection{Implementation Details}
\label{appendix:implementation_details}

The training was performed on an 11th Gen Intel(R) Core(TM) i7-11390H CPU and on an Intel(R) Xeon(R) Gold 6230 CPU @ 2.10GHz. The experiments took approximately 48 hours to run. Our architecture employs a two-hidden-layer neural network for \(\phi(\cdot)\). Furthermore, we use a Batch Normalization layer before the softmax layer to stabilize training.

We impose a constraint on the matrix \( M \) such that the sum of the values in each row equals 1, and we clip all values to remain within the range of 0 to 1. In Theorem~\ref{theorem:regularization}, we show how encouraging distributions with low variance leads to recovering the ground-truth latent distribution. The regularization parameter \(\lambda\) is tuned on the validation set from \{1e-3,1e-2,1e-1,1e0,1e1\}.

For optimization, we utilize the Adam optimizer with a learning rate of \(1 \times 10^{-3}\).  While training \(\phi(\cdot)\), we run the model for 100 epochs and perform early stopping based on a held-out validation set from the same distribution as the training set.

For the mixture of experts model, we train the model for 25 epochs with early stopping based on a validation set from the same distribution as the training set.

We do not assume \(n_z\) a priori. Instead, we determine \(n_z\) during training. To determine the optimal \(n_z\), we start with \(n_z = 1\) and train our encoder-decoder model to estimate the latent distribution \(P(X, S, Z)\). We incrementally increase \(n_z\) until a stopping criterion based on the validation reconstruction loss of \(S\) is met. Specifically, we stop increasing \(n_z\) at \(k\) if $\mathcal{L}_{S}(k+1) \geq \mathcal{L}_{S}(k)$, where $\mathcal{L}_{S}(k)$ denotes the reconstruction loss at \(k\) on the validation set.

The justification for this method is straightforward. Let $P(X, S, Z)$ represent the ground truth distribution, and let $n'_z$ denote the true number of categories of $Z$. Note that $P$ achieves the minimum reconstruction loss. If another latent distribution $Q(X, S, Z)$ with $n_z$ matches the observed distribution and achieves the minimum reconstruction loss (i.e., $P(X,S) = Q(X,S)$), then $\sum_z P(S|Z)P(Z|X) = \sum_z Q(S|Z)Q(Z|X)$. Since $P(S|Z)$ is full rank, this implies $n_z \geq n'_z$. Figure~\ref{fig:val_loss} illustrates the reconstruction loss versus different values of $n_z$ on ACSIncome. The plots shows that \(n_z = 3\) should be the ground-truth \(n_z\) and this agrees with the dataset.

\begin{figure}[htbp]
\centering
\begin{tikzpicture}
\begin{axis}[
    title={Validation Loss for different \(n_z\) for ACS Income Dataset},
    xlabel={\( n_z \)},
    ylabel={\( L_S^{\text{val}} \)},
    xmin=1, xmax=6,
    ymin=1.0, ymax=2.0,
    xtick={1,2,3,4,5,6},
    ytick={1.0,1.1,1.2,1.3,1.4,1.5,1.6,1.7,1.8,1.9},
]

\addplot[
    color=blue,
    mark=square,
    ]
    coordinates {
    (1,1.7922)(2,1.5322)(3,1.3275)(4,1.3306)(5,1.3413)(6,1.336)
    };
\end{axis}
\end{tikzpicture}
\caption{Plot of \( L_S^{\text{val}} \) over different \(n_z\).}
\label{fig:val_loss}
\end{figure}

\paragraph{Baselines}
We implement GroupDRO~\citep{DROwithlabel}, IRM~\citep{IRM}, V-REx~\citep{krueger2021out}, DeepCORAL~\citep{deepcoral}, and DANN~\citep{dann} using the DomainBed code~\citep{domainbed}. We use early stopping based on a validation set for all these baselines. We implement ProxyDA ~\citep{ProxyDA} using their publicly available GitHub repository. For V-REx, IRM, and DeepCORAL, we tune the regularization penalty from \{1e-2, 1e-1, 1e0, 1e1\}. For GroupDRO, we tune the group-step size from \{1e-2, 1e-1, 1e0, 1e1\}. For ProxyDA, we used random noise for the other variable. We subsampled all training datasets on the real tasks to 5000 samples respectively due to memory constraints. We tuned three hyperparameters: cme, m0, and scale using a validation set from the same distribution as the training set, within the range \{1e-4, 1e-3, 1e-2, 1e-1, 1e0\}. For the ACS Employment dataset, we transformed the data into a lower dimension using Gaussian Random Projection as suggested by \citet{ProxyDA}, tuning the number of dimensions from \{2, 4, 8, 16\}.

\subsection{Additional Results}
\label{appendix:additional_auc}
Table~\ref{tab:real_data_auc} provides the AUC for the experiments on real data in Section~\ref{sec:results}.
\begin{table}
    \centering
    \caption{AUC Performance on Real Data (Proxy and Multi-source OOD tasks)}
    \label{tab:real_data_auc}
    \begin{tabular}{lccccc}
        \toprule
        \multirow{2}{*}{Method} & Employment & Income & Mobility & Public Coverage & Travel Time \\
        \cmidrule(lr){2-2} \cmidrule(lr){3-3} \cmidrule(lr){4-4} \cmidrule(lr){5-5} \cmidrule(lr){6-6}
         & AUC OOD $\uparrow$ & AUC OOD $\uparrow$ & AUC OOD $\uparrow$ & AUC OOD $\uparrow$ & AUC OOD $\uparrow$ \\
        \midrule
        ERM & 0.626 (0.009) & 0.810 (0.005) & 0.691 (0.003) & 0.736 (0.007) & 0.534 (0.022) \\
        GroupDRO & 0.667 (0.006) & 0.783 (0.010) & 0.665 (0.011) & 0.682 (0.009) & 0.500 (0.000) \\
        IRM & 0.671 (0.005) & 0.817 (0.005) & 0.685 (0.006) & 0.730 (0.009) & 0.582 (0.000) \\
        DeepCORAL & 0.652 (0.007) & 0.825 (0.006) & 0.695 (0.004) & 0.735 (0.004) & 0.558 (0.063) \\
        VREx & 0.623 (0.017) & 0.809 (0.006) & 0.673 (0.010) & 0.699 (0.009) & 0.500 (0.000) \\
        DANN & 0.667 (0.003) & 0.816 (0.009) & 0.699 (0.003) & 0.725 (0.007) & 0.588 (0.008) \\
        ProxyDA & 0.609 (0.008) & 0.718 (0.113) & 0.694 (0.146) & NaN & 0.629 (0.025) \\
        Ours & \textbf{0.718 (0.003)} & \textbf{0.849 (0.002)} & \textbf{0.704 (0.004)} & \textbf{0.742 (0.013)} & \textbf{0.652 (0.000)} \\
        \bottomrule
    \end{tabular}
\end{table}

\end{document}